%% file: iclr2026_conference.tex
\documentclass{article} 
\usepackage{iclr2026_conference,times}

\input{math_commands.tex}

\usepackage{hyperref}
\usepackage{url}
\usepackage{amsthm}
\usepackage{algorithm}
\usepackage{algpseudocode}
\usepackage{booktabs}
\usepackage{multirow}
\usepackage{graphicx}
\usepackage{wrapfig}
\usepackage{enumitem}
\usepackage{caption}
\usepackage{pifont}
\usepackage[table]{xcolor}
\usepackage{color}
\usepackage{soul}

\newtheorem{theorem}{Theorem}[section]
\newtheorem{corollary}[theorem]{Corollary}

\newcommand{\rowidx}{\stepcounter{rownum}\arabic{rownum}}

\title{Amortized Moment Matching for Visual\\ Generation}

\author{
    Wenze Liu$^{1,}$\thanks{Work done at Kuaishou Technology.} \quad
    Xintao Wang$^{2}$ \quad
    Pengfei Wan$^{2}$ \quad
    Xiangyu Yue$^{1,}$\thanks{Corresponding author.} \\
    \\
    $^1$MMLab, CUHK \\
    $^2$Kling Team, Kuaishou Technology
}

\iclrfinalcopy 
\begin{document}

\maketitle
\lhead{Preprint}
\begin{abstract}
We propose amortized moment matching, utilizing neural networks to learn data moments as distributional training signals. By casting diffusion denoisers through polynomial projections, we establish a general framework for moment amortization, revealing that an $n$-th degree projection explicitly identifies data moments up to order $n+1$. Derived from the tractable affine case, we instantiate the Amortized Fr\'{e}chet Distance (AMFD) loss. Unlike FD-loss which relies on explicit marginal moment calculations, AMFD is able to dynamically learn conditional moments via an alternating, matrix-free optimization pipeline that effortlessly scales to high-dimensional data. When operating on global representation features, AMFD serves as a powerful post-training objective; empirically, its neural formulation yields more robust training dynamics than exact statistical matching, substantially surpassing the FD baseline on the FDr$^6$ metric and achieving superior one-step generation on ImageNet. Furthermore, it unlocks direct exploration within native generative spaces, suggesting that the first two moments can identify target distributions only in spaces with strong semantics. Finally, when scaled to text-to-image generation, the condition-aware nature of AMFD unlocks massive gains in instruction-following capabilities, enabling our one-step models to outperform their multi-step FLUX.2 [klein] 4B teachers on the GenEval benchmark while achieving on-par performance on PickScore. Code and checkpoints are available at \url{https://github.com/poppuppy/amfd}.
\end{abstract}

\section{Introduction}
\begin{wrapfigure}[18]{r}{0.43\textwidth}
\vspace{-16pt}
  \centering
  \includegraphics[width=\linewidth]{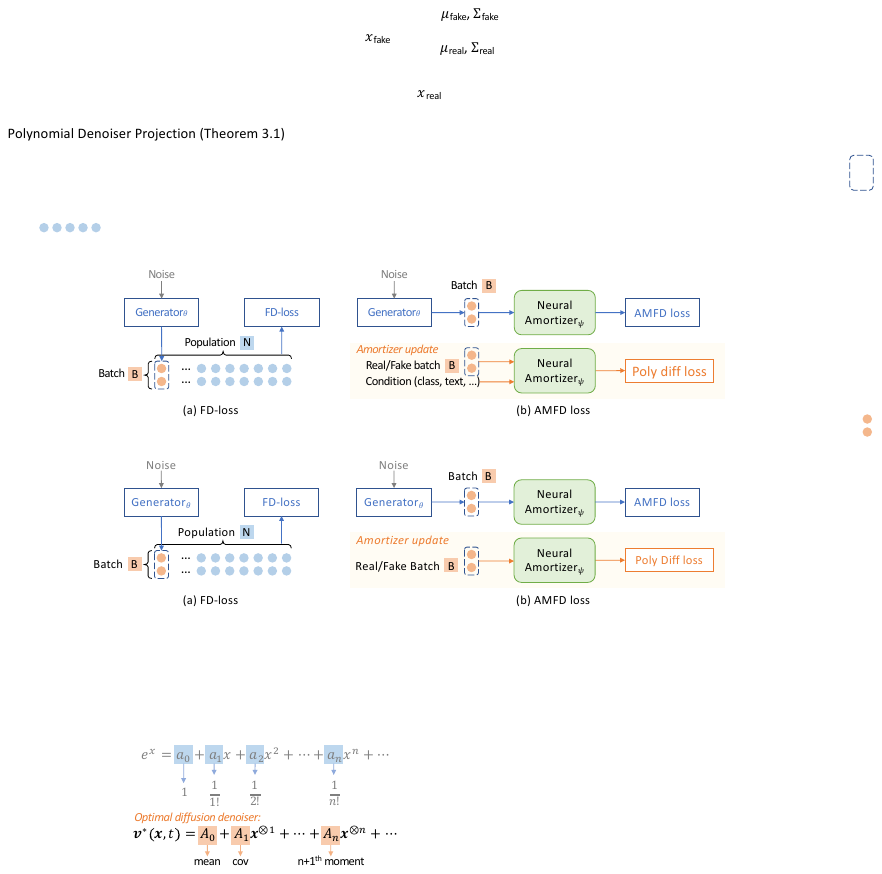}
  \caption{\textbf{Diffusion denoiser polynomial projection.} The Maclaurin series expands $e^x$ into scalar polynomial terms; analogously, expanding the optimal diffusion denoiser via tensor powers yields high-dimensional coefficients. We show that these coefficients explicitly map to the underlying distributional moments, where $A_0$ corresponds to the mean, $A_1$ to the covariance, and $A_n$ to the $(n+1)$-th moment.}
  \label{fig:projection}
\end{wrapfigure}
Distribution matching is a natural paradigm for generative modeling, aiming to directly align the generated distribution with the target distribution. Classical approaches adopt moment matching and maximum mean discrepancy (MMD) objectives~\citep{sriperumbudur2010hilbert,gretton2012kernel,li2015generative,dziugaite2015training,mathiasen2020backpropagating}. However, they struggled to become competitive for visual generation, largely due to challenges in finite-sample estimation, kernel selection, and optimization stability~\citep{li2017mmd}. In contrast, dominant paradigms such as diffusion and flow matching~\citep{sohl2015deep,song2019generative,ho2020denoising,liu2022flow,lipman2022flow}, autoregressive models~\citep{van2016pixel,razavi2019generating,esser2021taming,sun2024autoregressive}, GANs~\citep{goodfellow2014generative,karras2019style}, and normalizing flows~\citep{dinh2016density,kingma2018glow,zhai2024normalizing} circumvent explicit distribution matching by adopting particle-level generative rules, allowing distributional alignment to emerge implicitly.

Recent progress, however, reveals that the historical weakness of direct distribution matching stemmed largely from the underlying data spaces. Once shifted to semantically rich representation spaces, distribution matching emerges as an effective paradigm. For instance, Drifting models~\citep{deng2026generative} utilize kernel forces to evolve generated distributions, while the FD-loss~\citep{yang2026representation} directly optimizes mean and covariance matching with a large population. Advancing the perspective of moment matching, we propose \emph{amortized moment matching}. Rather than estimating moments purely from empirical batches or fixed global references, we amortize the statistics themselves: a neural amortizer predicts moment operators as dynamic functions of the conditioning signal and/or the target representation space. Our theoretical motivation stems from diffusion models, where a denoising model learns to approximate the conditional expectation of the clean data or the velocity field given the noisy observation. As illustrated in Figure~\ref{fig:projection}, we analyze this denoiser function through polynomial projections: the affine projection recovers mean and covariance information, while higher-degree projections correspond to amortizing higher-order moments. This formulation offers a generalized route to moment amortization.

\begin{figure}[t]
    \centering
    \includegraphics[width=\linewidth]{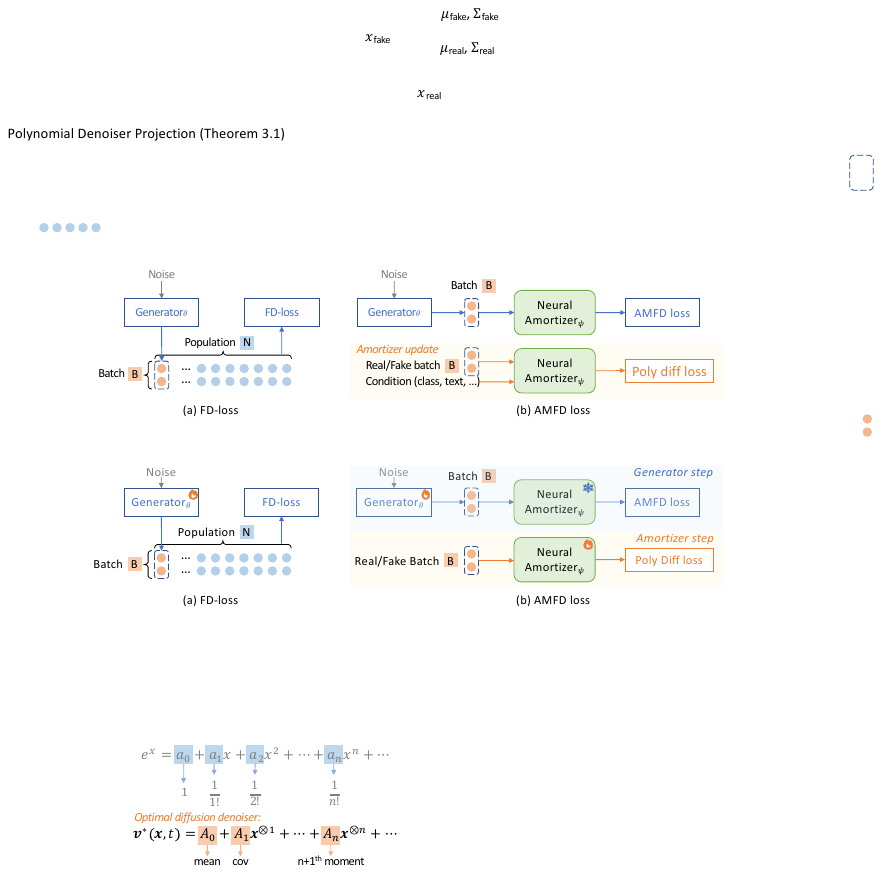}
    \caption{\textbf{Comparison between FD-loss and our AMFD loss.} (a) FD-loss explicitly computes marginal distribution statistics over a large population, propagating gradients through mini-batches. (b) Our AMFD loss uses neural amortizers which can leverage neural generalization to naturally encode conditions. Through an alternating training scheme, these amortizers dynamically maintain the conditional mean and covariance statistics, which the generator directly queries to obtain gradient signals for updating.}
    \label{fig:method}
    \vspace{-12pt}
\end{figure}

While high-order moment tensors grow exponentially in size, we instantiate a tractable and powerful second-order case termed the \emph{Amortized Fr\'{e}chet Distance} (AMFD) loss. When operating on global representation features, AMFD serves as an effective post-training objective that can enhance generative models and transform multi-step models into one-step generators. Crucially, in contrast to FD-loss~\citep{yang2026representation} which computes aggregated marginal moments, AMFD injects the conditioning signal directly into the amortizer. This aligns naturally with modern diffusion models, where denoisers operate as conditional functions of labels or text prompts, enabling dynamic, condition-aware distribution matching. As illustrated in Figure~\ref{fig:method}(b), AMFD formulates this as a fully deep-learning-native pipeline driven by an alternating optimization between the generator and the neural amortizer. To avoid the prohibitive complexity of explicit covariance matrices, we parameterize the affine function via Jacobian-Vector-Product (JVP) actions. This matrix-free efficiency effortlessly scales to high dimensions, which additionally unlocks the exploration of whether matching merely the first two moments provides sufficient generative signals within native generative spaces (\textit{e.g.}, raw pixels, VAE/RAE latents~\citep{rombach2022high,zheng2025diffusion}), where FD-loss is computationally intractable. Ultimately, AMFD strategically trades strict empirical exactness for superior conditional generalization and scalability.

\begin{figure}[!t]
    \centering
    \includegraphics[width=\linewidth]{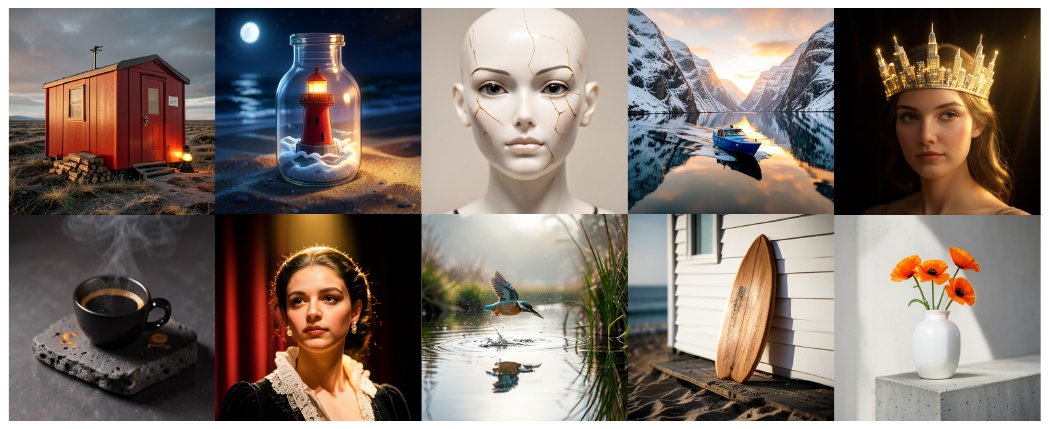}
    \caption{One-step text-to-image samples generated by our AMFD post-trained FLUX.2 [klein] 4B.}
    \label{fig:t2i-samples}
\end{figure}

We evaluate AMFD in diverse visual generation tasks. First, in ImageNet-256$\times$256 post-training, its neural formulation yields more robust optimization dynamics than the FD-loss baseline, achieving state-of-the-art FDr$^6$ scores of $1.79$ on JiT-H/16 and $1.75$ on pMF-H/16. Second, exploration within native generative spaces suggests that the first two moments can identify target distributions only in spaces with strong semantics, such as the RAE space. Finally, when scaled to text-to-image generation with pixel-space PixelGen~\citep{ma2026pixelgen} and latent-space FLUX.2 [klein] 4B~\citep{blackforestlabs2026flux2klein}, the condition-aware nature of AMFD unlocks massive gains in instruction-following capabilities. Notably, our one-step post-trained FLUX models significantly surpass their multi-step and few-step teachers on the GenEval benchmark, while maintaining on-par on PickScore.

Our main contributions are summarized as follows:
\begin{itemize}[leftmargin=*, itemsep=0pt, topsep=0pt, parsep=0pt]
\item For the first time, we establish a rigorous connection between diffusion denoisers and data moments by proving that an $n$-th degree polynomial projection of a denoiser explicitly identifies moments up to order $n+1$. Grounded in this theory, we introduce \emph{amortized moment matching}, a scalable framework that learns and matches distributional moments via neural amortizers.
\item We instantiate the tractable second-order case as the Amortized Fr\'{e}chet Distance (AMFD) loss. Unlike FD-loss that is restricted to unconditional marginals, AMFD leverages neural generalization to efficiently match fine-grained conditional moments---a capability crucial for scaling to complex text-to-image (T2I) generation.
\item Empirically, AMFD achieves state-of-the-art FDr$^6$ for one-step generation on ImageNet-256$\times$256, reaching $1.79$ (JiT-H/16) and $1.75$ (pMF-H/16). Furthermore, when extended to text-to-image generation, our one-step FLUX.2 models surpass their multi-step teachers on the GenEval benchmark ($0.846$ vs. $0.794$) while maintaining on-par performance on PickScore.
\end{itemize}

\section{Preliminaries}
\label{sec:preliminaries}

\subsection{Fr\'{e}chet Distance}
Let $P_r$ and $P_g$ denote the real and generated data distributions over the image space $\mathcal{X}$. Given a frozen representation encoder $\phi:\mathcal{X}\to\mathbb{R}^D$, we aim to compare the push-forward feature distributions $\phi_\# P_r$ and $\phi_\# P_g$. We define their respective first two moments for $i \in \{r, g\}$ as:
\begin{equation}
\label{eq:feature-moments}
\mu_i = \mathbb{E}_{X\sim P_i}[\phi(X)], \quad \Sigma_i = \operatorname{Cov}_{X\sim P_i}[\phi(X)],
\end{equation}
where $X$ denotes the random variable in $\mathcal{X}$. The squared Fr\'{e}chet distance, which measures the discrepancy between their Gaussian approximations, is defined as:
\begin{equation}
\label{eq:frechet-distance}
d_{\mathrm{FD}}^2(P_g,P_r;\phi) = \|\mu_g-\mu_r\|_2^2 + \operatorname{Tr}\left(\Sigma_g+\Sigma_r-2(\Sigma_g^{1/2}\Sigma_r\Sigma_g^{1/2})^{1/2}\right).
\end{equation}
When $\phi$ is Inception-v3~\citep{szegedy2016rethinking}, this recovers the widely used Fr\'{e}chet Inception Distance (FID)~\citep{heusel2017gans}. Recently, the FD-loss~\citep{yang2026representation} proposed directly optimizing Eq.~\ref{eq:frechet-distance} across multiple representation spaces to train generative models.

\subsection{Diffusion Models}
While our theoretical framework applies broadly to diffusion models, we adopt the flow matching formulation with linear interpolants~\citep{liu2022flow,lipman2022flow,albergo2023stochastic} for clarity. Consider a base distribution $X_0\sim P_0=\mathcal{N}(0,I)$ and a target conditional data distribution $X_1\sim P_r(\cdot\mid c)$. The linear interpolant is defined as $X_t=(1-t)X_0+tX_1$ for $t\in[0,1]$.

Flow matching trains a neural vector field $v_\theta$ by minimizing the regression objective:
\begin{equation}
\label{eq:flow-matching-loss}
\mathcal{L}_{\mathrm{FM}}(\theta) = \mathbb{E}\left[\left\| v_\theta(X_t,t,c)-(X_1-X_0)\right\|_2^2\right],
\end{equation}
which yields the population-level optimum:
\begin{equation}
\label{eq:flow-matching-optimum}
v^\star(x,t,c) = \mathbb{E}[X_1-X_0\mid X_t=x,c].
\end{equation}
This optimal vector field drives the probability flow ordinary differential equation (ODE)
\begin{equation}
\label{eq:flow-ode}
\frac{dX_t}{dt}=v^\star(X_t,t,c),
\end{equation}
which deterministically transports the prior distribution $P_0$ to the target distribution $P_r(\cdot\mid c)$.

\section{Amortized Moment Matching: Theoretical Motivation}
\label{sec:theory}

While the flow-matching objective Eq.~\ref{eq:flow-matching-loss} is conventionally interpreted as learning a transport vector field, we demonstrate from a distributional perspective that its optimal vector field (Eq.~\ref{eq:flow-matching-optimum}) fundamentally encodes conditional data statistics. Specifically, by analyzing the orthogonal projection of this denoiser onto polynomial function classes, we reveal a natural moment hierarchy. This connection allows us to isolate the affine case to derive a computationally tractable second-order matching objective. We formalize this insight below (see Appendix~\ref{sec:proof} for the rigorous formulation and proof).

\begin{theorem}[Denoising Projection Moment Hierarchy, Informal]
\label{thm:denoising-projection}
Under the linear interpolant $X_t = tX_1 + (1-t)X_0$, fix a condition $c$ and time $t\in(0,1)$. Let $\mathcal{P}_n$ denote the space of vector-valued polynomials in $X_t$ of degree at most $n$. The optimal degree-$n$ polynomial denoiser
\begin{equation}
\label{eq:polynomial-denoiser-projection}
v_n^\star = \arg\min_{v\in\mathcal{P}_n} \mathbb{E}\left[ \|v(X_t, t, c) - (X_1 - X_0)\|^2 \mid c \right],
\end{equation}
is exactly the orthogonal projection of the unconstrained flow-matching velocity $v^\star$ onto $\mathcal{P}_n$. 

Crucially, $v_n^\star$ satisfies the normal equations, meaning it perfectly matches the moment tensors of the target velocity up to order $n$:
\begin{equation}
\label{eq:flow-moment-tensors}
\mathbb{E}\left[ v_n^\star(X_t, t, c) \otimes X_t^{\otimes k} \mid c \right]
=
\mathbb{E}\left[ (X_1 - X_0) \otimes X_t^{\otimes k} \mid c \right]
:= G_k^t(P\mid c), 
\quad \forall k=0,\dots,n.
\end{equation}
Writing the data moments as $M_j(P\mid c) = \mathbb{E}[X_1^{\otimes j}\mid c]$, these matched tensors $G_k^t$ obey a triangular recurrence relation:
\begin{equation}
\label{eq:triangular-moment-relation}
G_k^t(P\mid c) = t^k M_{k+1}(P\mid c) + F_k^t(M_0,\dots,M_k),
\end{equation}
where $F_k^t$ is a deterministic function depending solely on $t$, the Gaussian moments of $X_0$, and the lower-order data moments $\{M_j\}_{j=0}^k$. Consequently, optimizing the degree-$n$ denoising objective uniquely determines the conditional data moments up to order $n+1$.
\end{theorem}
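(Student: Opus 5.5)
The argument is the standard Hilbert-space one, followed by a binomial expansion of $X_t^{\otimes k}$.

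\textbf{Step 1: projection and normal equations.} Fix $c$ and $t\in(0,1)$. Work in $L^2$ of vector-valued functions of $X_t$ under the conditional law given $c$. I assume the data have finite moments up to order $2n+2$, so that $\mathcal{P}_n$ is a finite-dimensional (hence closed) subspace of this $L^2$. Write $Y=X_1-X_0$. The tower property gives
\begin{equation*}
\mathbb{E}[\|v(X_t)-Y\|^2\mid c]=\mathbb{E}[\|v(X_t)-v^\star(X_t)\|^2\mid c]+\mathbb{E}[\|v^\star(X_t)-Y\|^2\mid c],
\end{equation*}
because the cross term vanishes by conditioning on $X_t$. Minimizing over $\mathcal{P}_n$ is therefore the same as the orthogonal projection of $v^\star$ onto $\mathcal{P}_n$. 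Existence and uniqueness in $L^2$ follow from closedness. Since $X_0$ is Gaussian and $t<1$, $X_t$ has a density, so no nonzero polynomial vanishes a.s. and the $L^2$ element is a unique polynomial.

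The characterization of the projection is $\mathbb{E}[(v_n^\star-Y)\cdot q(X_t)\mid c]=0$ for every $q\in\mathcal{P}_n$. I will apply this to the coordinate test functions $q=e_i\,\prod_m (X_t)_{j_m}$ with $k\le n$ factors. Collecting the coordinates gives exactly the tensor identity Eq.~\ref{eq:flow-moment-tensors}.

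\textbf{Step 2: triangular relation.} Write $X_1-X_0=X_1-X_0$ and expand $X_t^{\otimes k}=(tX_1+(1-t)X_0)^{\otimes k}$ multilinearly as a sum over subsets $S\subseteq\{1,\dots,k\}$ of ordered tensor products. Factors in $S$ are $tX_1$ and the rest are $(1-t)X_0$.

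For the $X_1$ part of $G_k^t$, the term $S=\{1,\dots,k\}$ gives $t^k\,\mathbb{E}[X_1^{\otimes(k+1)}\mid c]=t^kM_{k+1}$. Every other subset contains at most $k$ copies of $X_1$, tensored with some copies of $X_0$.

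Independence of $X_0$ from $(X_1,c)$ factorizes each expectation. The result is a slot-permuted tensor product of $M_j$ (with $j\le k$) and a Gaussian moment $\mathbb{E}[X_0^{\otimes(k+1-j)}]$. The same holds for the $-X_0$ part, which involves at most $k$ copies of $X_1$. Gaussian moments are explicit (Isserlis) and odd ones vanish. So every remaining term is a deterministic multilinear function of $t$, the Gaussian moments and $M_0,\dots,M_k$. This defines $F_k^t$.

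\textbf{Step 3: identifiability.} Since $t>0$, I invert the relation recursively: $M_{k+1}=t^{-k}(G_k^t-F_k^t(M_0,\dots,M_k))$. Starting from $M_0=1$, this recovers $M_1,\dots,M_{n+1}$ from $G_0^t,\dots,G_n^t$. By Step 1, these are determined by $v_n^\star$ through $G_k^t=\mathbb{E}[v_n^\star\otimes X_t^{\otimes k}\mid c]$.

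One subtlety: $X_t$ depends on $M_j$ through its law. However, the moments of $X_t$ are themselves polynomial in lower data moments, and the map from moments to $v_n^\star$ is well defined. So the claim to prove is that two data distributions with the same $v_n^\star$, under the same Gaussian $X_0$, share $M_1,\dots,M_{n+1}$.

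I would handle this jointly by induction on $k$. Suppose $M_0,\dots,M_k$ agree. Then the moments of $X_t$ up to order $k$ agree, so $\mathbb{E}[v_n^\star\otimes X_t^{\otimes k}]$ agrees, because the coefficients of $v_n^\star$ have degree at most $n$. The issue is that this expectation needs moments of $X_t$ of order up to $n+k$, not just $k$.

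\textbf{Main obstacle.} This circularity is the delicate step. I would first try using the normal equations in their Gram-matrix form: the coefficients of $v_n^\star$ solve a linear system whose matrix is the $X_t$-moments up to order $2n$. Alternatively, I would interpret the theorem's "determines" as saying that $M_{\le n+1}$ is a function of the pair ($v_n^\star$, law of $X_t$). The second reading makes Step 3 immediate, and I would adopt it in the formal statement.
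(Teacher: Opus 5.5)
Your Steps 1--3 follow the paper's proof essentially line by line. Both use the tower-property splitting of the risk, normal equations obtained by testing against $z\mapsto z^{\otimes k}$, the multilinear expansion of $(tX_1+(1-t)X_0)^{\otimes k}$ with independence factorizing the mixed terms, and back-substitution using $t^k>0$. Your moment assumption and the density argument for uniqueness of the polynomial representative add rigor the paper omits.

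The circularity you flag is real, and the paper does not resolve it either. Its last step just says that knowing $G_k^t$, which the projection matches, permits back-substitution. So what it actually proves is the first half of your Step 3: the map $(G_0^t,\dots,G_n^t)\mapsto(M_1,\dots,M_{n+1})$ is injective, and $v_n^\star$ enters only as something that reproduces these tensors through the normal equations. That is the reading you should adopt; neither of your two options works.

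\emph{Option (b)} is true but empty. The moments of $X_t$ up to order $n+1$ already give $M_1,\dots,M_{n+1}$ by the same triangular inversion, so $v_n^\star$ plays no role. Indeed, the law of $X_t$ determines the law of $X_1$, since the Gaussian characteristic function never vanishes.

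\emph{Option (a)} cannot succeed, because for $n\ge 2$ the literal claim ``$v_n^\star$ alone determines $M_{\le n+1}$'' is false. Take $D=1$ and $n=2$.
Since $X_1-X_0=(X_t-X_0)/t$, knowing $v_2^\star$ amounts to knowing the best quadratic predictor $a_0+a_1z+a_2z^2$ of $X_0$ from $X_t$. Stein's lemma gives $\mathbb{E}[X_0X_t^k]=(1-t)k\,m_{k-1}$ with $m_j=\mathbb{E}[X_t^j]$. Hence the Gram-form normal equations are
\begin{equation*}
\sum_{j=0}^{2}a_j\,m_{j+k}=(1-t)\,k\,m_{k-1},\qquad k=0,1,2.
\end{equation*}
For fixed $(a_0,a_1,a_2)$ with $a_2\neq 0$, this is a rank-three linear system in $(m_1,\dots,m_4)$, so its solutions form a line. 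The mean $m_1$ must vary along that line: if $\delta m_1=0$, the triangular structure forces $\delta m_2=\delta m_3=\delta m_4=0$.

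These nearby solutions are realized by genuine data laws. Positive-definiteness of the Hankel matrix of $(1,M_1,\dots,M_4)$ is an open condition that guarantees a representing law, and $M\mapsto m$ is an invertible triangular linear map. So small moves along the line correspond to actual laws for $X_1$. Their $X_t$ still has a density, hence a nonsingular Gram matrix, so these laws share $v_2^\star$ while having different means. Finally, $a_2\neq 0$ holds whenever, for example, $X_1$ is centered with $\mathbb{E}[X_1^3]\neq 0$.

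The strong reading does hold for $n\le 1$. By Corollary~\ref{cor:affine-covariance}, $v_1^\star(z)=(I-tR_t)\mu+R_t z$ returns $R_t$, hence $\Sigma$, and then $\mu$ because $I-tR_t$ is invertible. That is the only case AMFD actually relies on.
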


This theorem reveals that diffusion models inherently learn a hierarchy of amortized statistics, where the coefficients of the degree-zero, affine, and higher-degree terms correspond to the mean, covariance, and higher-order tensors, respectively. Since matching high-order tensors is computationally prohibitive in high-dimensional spaces, we isolate the affine case. This provides a tractable yet effective second-order instantiation, formalized as follows.

\begin{corollary}[Affine Projection Identifies Conditional Covariance]
\label{cor:affine-covariance}
Under the setting of Theorem~\ref{thm:denoising-projection}, assume $X_1\mid c$ has a finite second moment. Let $\mu(c)=\mathbb{E}[X_1\mid c]$ and $\Sigma(c)=\operatorname{Cov}(X_1\mid c)$. Then the degree-zero projection is exactly the conditional mean:
\begin{equation}
\label{eq:degree-zero-projection}
v_0^\star(z,t,c)=\mu(c),
\end{equation}
and the degree-one projection admits the affine form:
\begin{equation}
\label{eq:degree-one-affine-projection}
v_1^\star(z,t,c)
=
\mu(c)+R_t(c)\bigl(z-t\mu(c)\bigr),
\end{equation}
where the affine operator is given by
\begin{equation}
\label{eq:affine-covariance-operator}
R_t(c)
=
\bigl[t\Sigma(c)-(1-t)I\bigr]
\bigl[t^2\Sigma(c)+(1-t)^2I\bigr]^{-1}.
\end{equation}
For a fixed $t\in(0,1)$, the mapping $\Sigma(c)\mapsto R_t(c)$ is rigorously injective. Therefore, the affine component of the denoising velocity serves as an exact, injective transformation of the covariance.
\end{corollary}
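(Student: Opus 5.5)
The plan is to prove both claims directly from the normal equations of Theorem~\ref{thm:denoising-projection}, specialized to $n=0$ and $n=1$. Write $Y=X_1-X_0$. We use that $X_0\sim\mathcal{N}(0,I)$ is independent of $X_1$ given $c$, and that conditional on $c$ the relevant moments of $X_t=tX_1+(1-t)X_0$ are
\begin{equation*}
\mathbb{E}[X_t\mid c]=t\mu(c),\qquad \operatorname{Cov}(X_t\mid c)=t^2\Sigma(c)+(1-t)^2I,\qquad \operatorname{Cov}(Y,X_t\mid c)=t\Sigma(c)-(1-t)I.
\end{equation*}
Each identity is a one-line bilinearity computation, with the cross terms between $X_0$ and $X_1$ vanishing by independence. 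All of them are finite because of the finite second moment assumption.

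\emph{Degree zero.} Here $\mathcal{P}_0$ consists of constants $a$. The $k=0$ normal equation in \eqref{eq:flow-moment-tensors} forces $a=\mathbb{E}[Y\mid c]=\mu(c)-\mathbb{E}[X_0]=\mu(c)$, which is \eqref{eq:degree-zero-projection}.

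\emph{Degree one.} Write $v(z)=a+Bz$. The $k=0$ equation gives $a+Bt\mu=\mu$. Subtracting the product of means from the $k=1$ equation turns it into the covariance form $B\operatorname{Cov}(X_t\mid c)=\operatorname{Cov}(Y,X_t\mid c)$. Since $(1-t)^2>0$ and $\Sigma\succeq 0$, the matrix $\operatorname{Cov}(X_t\mid c)\succeq(1-t)^2I$ is positive definite, hence invertible. Therefore $B=R_t(c)$ as in \eqref{eq:affine-covariance-operator} and $a=\mu-tR_t\mu$, which rearranges to \eqref{eq:degree-one-affine-projection}. Uniqueness follows from strict convexity of the least-squares objective, again because $\operatorname{Cov}(X_t\mid c)$ is positive definite.

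\emph{Injectivity.} I expect this to be the main (though mild) obstacle, because $R_t$ is a product of two matrices and one must check that $\Sigma$ can actually be recovered from it. The key observation is that both factors are polynomials or inverses of polynomials in $\Sigma$, so they commute. Diagonalizing $\Sigma=U\operatorname{diag}(s_i)U^\top$ then gives $R_t=U\operatorname{diag}(f_t(s_i))U^\top$, where
\begin{equation*}
f_t(s)=\frac{ts-(1-t)}{t^2s+(1-t)^2},\qquad f_t'(s)=\frac{t(1-t)}{(t^2s+(1-t)^2)^2}>0.
\end{equation*}
So $f_t$ is strictly increasing on $[0,\infty)$, and its range lies in $[-1/(1-t),\,1/t)$. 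To invert, multiply the defining relation through, which is legitimate since everything commutes:
\begin{equation*}
(tI-t^2R_t)\,\Sigma=(1-t)I+(1-t)^2R_t.
\end{equation*}
Every eigenvalue of $R_t$ is strictly below $1/t$, so $tI-t^2R_t$ is invertible. This yields the explicit formula
\begin{equation*}
\Sigma=(tI-t^2R_t)^{-1}\bigl[(1-t)I+(1-t)^2R_t\bigr],
\end{equation*}
so $\Sigma$ is a function of $R_t$ and the map $\Sigma(c)\mapsto R_t(c)$ is injective.
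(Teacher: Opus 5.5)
Your proof is correct and follows essentially the same route as the paper. You both derive $v_0^\star$ and $v_1^\star$ from the least-squares (normal-equation/OLS) characterization using the same conditional covariances $t^2\Sigma+(1-t)^2I$ and $t\Sigma-(1-t)I$, and you arrive at the same inversion formula $\Sigma=(tI-t^2R_t)^{-1}[(1-t)I+(1-t)^2R_t]$.

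The one difference is in the injectivity step. The paper obtains $(tI-t^2R_t)\Sigma=(1-t)I+(1-t)^2R_t$ simply by right-multiplying $R_t$ by $t^2\Sigma+(1-t)^2I$, so the commutativity you invoke is not actually needed. It then gets invertibility from the identity $I-tR_t=(1-t)\bigl[t^2\Sigma+(1-t)^2I\bigr]^{-1}$, whereas you diagonalize and use the eigenvalue bound $f_t<1/t$; both arguments are valid.
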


\section{The Amortized Fr\'{e}chet Distance Loss}
\label{sec:amfd}

Motivated by the insight that affine denoising projections uniquely encode conditional covariances, we propose the Amortized Fr\'{e}chet Distance (AMFD) loss. We instantiate this matching process within a pre-trained representation space, as adopted by \citet{yang2026representation}. Below we detail the parameterization, followed by the full alternating training algorithm.

\subsection{Amortizer Parameterization}
\label{sec:amortizer-param}

Let $\{\phi_\ell\}_{\ell=1}^L$ be a set of frozen representation encoders. For a data branch $b\in\{r,g\}$, let $X_r^\ell=\phi_\ell(X_r)$ and $X_g^\ell=\phi_\ell(G_\theta(z,c))$ denote the real and generated features in $\mathbb{R}^{D_\ell}$. To capture their statistics, AMFD employs a neural amortizer that predicts a conditional mean $\mu_b^\ell(c)\in\mathbb{R}^{D_\ell}$ and an affine operator, which we parameterize residually as $R_{b,t}^\ell(c) = -I + tA_{b,t}^\ell(c)$ for numerical stability.

For a target feature $X_1^\ell \in \{X_r^\ell, X_g^\ell\}$ and noise $X_0^\ell\sim\mathcal{N}(0,I)$, we construct the interpolant $X_t^\ell=tX_1^\ell+(1-t)X_0^\ell$. Substituting the residual parameterization, our amortized affine denoiser takes the form:
\begin{equation}
\label{eq:amfd-affine-denoiser}
\hat v_b^\ell(X_t^\ell,t,c)
=
\mu_b^\ell(c)
+
R_{b,t}^\ell(c)\tilde X_t^\ell
=
\mu_b^\ell(c)
-
\tilde X_t^\ell
+
tA_{b,t}^\ell(c)\tilde X_t^\ell,
\end{equation}
where $\tilde X_t^\ell = X_t^\ell-t\mu_b^\ell(c)$ is the time-centered feature.

Materializing $A_{b,t}^\ell(c)$ explicitly would incur a prohibitive $O(D_\ell^2)$ memory cost in high-dimensional representations. We circumvent this by implicitly defining the operator via its JVP action on arbitrary vectors $u\in\mathbb{R}^{D_\ell}$. Given a differentiable neural mapping $f_{b,\psi}^\ell(s,t,c):\mathbb{R}^{D_\ell}\to\mathbb{R}^{D_\ell}$ governed by an auxiliary input $s$, the exact operator action is efficiently computed as:
\begin{equation}
\label{eq:jvp-operator}
A_{b,t}^\ell(c)u
=
\left.
\frac{\partial f_{b,\psi}^\ell(s,t,c)}{\partial s}
\right|_{s=0}
u
=
\operatorname{JVP}_{s}
\left[
f_{b,\psi}^\ell(s,t,c)
\right]_{s=0}(u).
\end{equation}
Detailed configurations for the JVP operation are provided in Appendix~\ref{app:details_jvp} and~\ref{app:general-amortizer-configs}.

\subsection{Training Algorithm}
\label{subsec:training_algo}

Figure~\ref{fig:method} (b) outlines the AMFD training procedure, which relies on an alternating optimization scheme (detailed alogrithm see Appendix~\ref{app:amfd_algorithm}). The amortizers learn to accurately estimate the conditional real and generated statistics, while the generator is trained to minimize the discrepancy between them. To ensure unbiased scale aggregation across diverse representation spaces and prevent high-dimensional features from disproportionately dominating the objective, all per-encoder mean, covariance, and amortizer losses employ the normalized
inner product $\langle a,b\rangle_\ell=\frac{1}{D_\ell}\left\langle a,b\right\rangle$ and its induced norm $\|a\|_\ell^2=\langle a,a\rangle_\ell$.

\textbf{Amortizer update.}
For branch \(b\in\{r,g\}\), define the target feature as
\[
X_1^\ell=
\begin{cases}
h_r^\ell, & b=r,\\
\operatorname{sg}[h_g^\ell], & b=g,
\end{cases}
\]
where $\operatorname{sg}[\cdot]$ denotes the stop-gradient operation. Given a sampled time $t\sim\mathcal{U}(0,1)$ and noise \(X_0^\ell\sim\mathcal N(0,I)\), we form the linear interpolant \(X_t^\ell\) and
train the affine denoiser (Eq.~\ref{eq:amfd-affine-denoiser}) via:
\begin{equation}
\label{eq:branch-amortizer-loss}
\mathcal L_{b}^{\ell}(\psi)
=
\mathbb E
\left[
\left\|
\hat v_b^\ell(X_t^\ell,t,c)
-
(X_1^\ell-X_0^\ell)
\right\|_\ell^2
\right]
+
\mathbb E
\left[
\left\|
\mu_b^\ell(c)-X_1^\ell
\right\|_\ell^2
\right].
\end{equation}
At the population optimum, the diffusion matching term natively identifies the conditional mean and covariance functions. The explicit mean-regression term is theoretically redundant but practically serves as a regularization factor that consistently improves empirical convergence. The total amortizer objective sums over all branches and encoders:
\begin{equation}
\label{eq:amortizer-loss}
\mathcal L_\mathrm{amort}(\psi)
=
\sum_{\ell=1}^L
\left(
\mathcal L_{r}^{\ell}(\psi)
+
\mathcal L_{g}^{\ell}(\psi)
\right).
\end{equation}

\textbf{Generator mean loss.}
The generator should modify its features along the direction that explicitly minimizes the mean discrepancy \(\mu_g^\ell(c)-\mu_r^\ell(c)\). We optimize this using the first-variation objective:
\begin{equation}
\label{eq:generator-mean-loss}
\mathcal L_{\mu}^{\ell}(\theta)
=
\left\langle
\operatorname{sg}\!\left[
\mu_g^\ell(c)-\mu_r^\ell(c)
\right],
h_g^\ell
\right\rangle_\ell .
\end{equation}
Crucially, gradients are only routed through the generated features \(h_g^\ell\), while the amortizer-predicted means act as frozen empirical targets.

\textbf{Generator covariance loss.}
For a given encoder $\ell$, we define the centered generated feature as $u^\ell = h_g^\ell-\operatorname{sg}[\mu_g^\ell(c)]$. Ideally, the generator would directly minimize the quadratic discrepancy between the two conditional covariance matrices. Though the amortizer-predicted affine operator $A_t^\ell(c)$ maps uniquely to $\Sigma^\ell(c)$, directly substituting $A_t^\ell(c)$ for $\Sigma^\ell(c)$ is poorly conditioned.

Instead, we project the discrepancy through the local Jacobian of the mapping $A_t^\ell \mapsto \Sigma^\ell$ evaluated at the generated branch. This yields the Jacobi operator $\mathcal{J}_{g,t}^\ell(c)u = (1+t)u-t^2A_{g,t}^\ell(c)u$, which defines the local pullback force:
\begin{equation}
\label{eq:jacobi-cov-force}
\delta a_{\mathrm J}^\ell = \mathcal{J}_{g,t}^\ell(c) \left( A_{g,t}^\ell(c)-A_{r,t}^\ell(c) \right) \mathcal{J}_{g,t}^\ell(c)u^\ell.
\end{equation}
We then compute the generator covariance loss as:
\begin{equation}
\label{eq:generator-cov-loss-jacobi}
\mathcal{L}_{\mathrm{cov}}^{\ell}(\theta) = \omega_{\mathrm{cov}}(t) \left\langle u^\ell, \delta a_{\mathrm J}^\ell \right\rangle_\ell,
\end{equation}
where $\omega_{\mathrm{cov}}(t)$ is a scaling coefficient (derivation detailed in Appendix~\ref{app:jacobi-covariance}) required to approximately match the weighting of the FD objective (Eq.~\ref{eq:frechet-distance}). During the generator update, the amortizers are frozen, but the internal JVP operator actions (Eq.~\ref{eq:jvp-operator}) are dynamically tethered to $u^\ell$, ensuring exact gradient flow to the generator. Notably, ignoring this local geometry yields a non-Jacobi variant $\delta a_{\mathrm{noJ}}^\ell = (A_{g,t}^\ell(c) - A_{r,t}^\ell(c))u^\ell$, which we empirically found to be significantly less effective, thereby validating our Jacobi-adjusted formulation.

\textbf{Multi-representation normalization.}
Aggregating losses across multiple heterogeneous encoders often produces generator gradients with highly disproportionate magnitudes. Similar to FD-loss~\citep{yang2026representation}, we normalize each loss via a per-sample gradient proxy, thereby stabilizing the optimization process. For encoder \(\ell\), the proxy vector is defined as:
\begin{equation}
\label{eq:generator-gradient-proxy}
g_{\mathrm{proxy}}^\ell
=
\operatorname{sg}\!\left[
\mu_g^\ell(c)-\mu_r^\ell(c)
\right]
+
2\omega_{\mathrm{cov}}(t)\delta a_{\mathrm J}^\ell .
\end{equation}
The normalized per-encoder generator loss is subsequently computed as:
\begin{equation}
\label{eq:normalized-generator-loss}
\mathcal L_{\mathrm{norm}}^\ell(\theta)
=
\mathbb E
\left[
\frac{
\mathcal L_{\mu}^{\ell}(\theta)
+
\mathcal L_{\mathrm{cov}}^{\ell}(\theta)
}{
\left(
\sum_{j=1}^{D_\ell}
\left(g_{\mathrm{proxy},j}^\ell\right)^2
\right)^{\rho}
+
\epsilon
}
\right],
\end{equation}
where \(\rho\) and \(\epsilon\) serve as scaling hyperparameters. The final unified generator objective aggregates these normalized losses across all \(L\) representation encoders:
\begin{equation}
\label{eq:generator-loss}
\mathcal L_\mathrm{gen}(\theta)
=
\sum_{\ell=1}^L
\mathcal L_{\mathrm{norm}}^\ell(\theta) .
\end{equation}

\section{Experiments}
\label{sec:experiments}

In this section, we empirically evaluate our proposed AMFD framework. First, we isolate and ablate the specific design choices of the amortizers. Second, we assess AMFD in ImageNet post-training and extend our study to native generative spaces. Finally, we investigate the transferability of the AMFD objective to text-to-image post-training across both pixel and latent spaces.

\subsection{Amortizer Implementation}
We instantiate one amortizer for each representation encoder, yielding a total of $L$ amortizers with randomly initialized weights. For global feature representations, the amortizer is an AdaLN-conditioned MLP~\citep{li2024autoregressive}; in text-to-image models, a cross-attention module is integrated into every AdaLN block to process text conditioning. For native generative spaces, the amortizer mirrors the generator's architecture. We refer to this condition-aware configuration as AMFD-C (Conditional). For unconditional generation, denoted as AMFD-U (Unconditional), we retain the same network architecture but replace the conditioning input with a fixed dummy condition. The real and fake branches share identical amortizer parameters, relying exclusively on the time step $t$ to distinguish the specific prediction task. More details are provided in Appendix~\ref{app:details_ablation} and~\ref{app:general-amortizer-configs}.

\subsection{Amortizer Ablations}
\label{sec:amortizer-ablation}

\begin{figure}[t!]
    \centering
    \begin{minipage}[c]{0.31\linewidth}
        \setlength{\abovecaptionskip}{1pt}
        \centering
        \includegraphics[width=\linewidth]{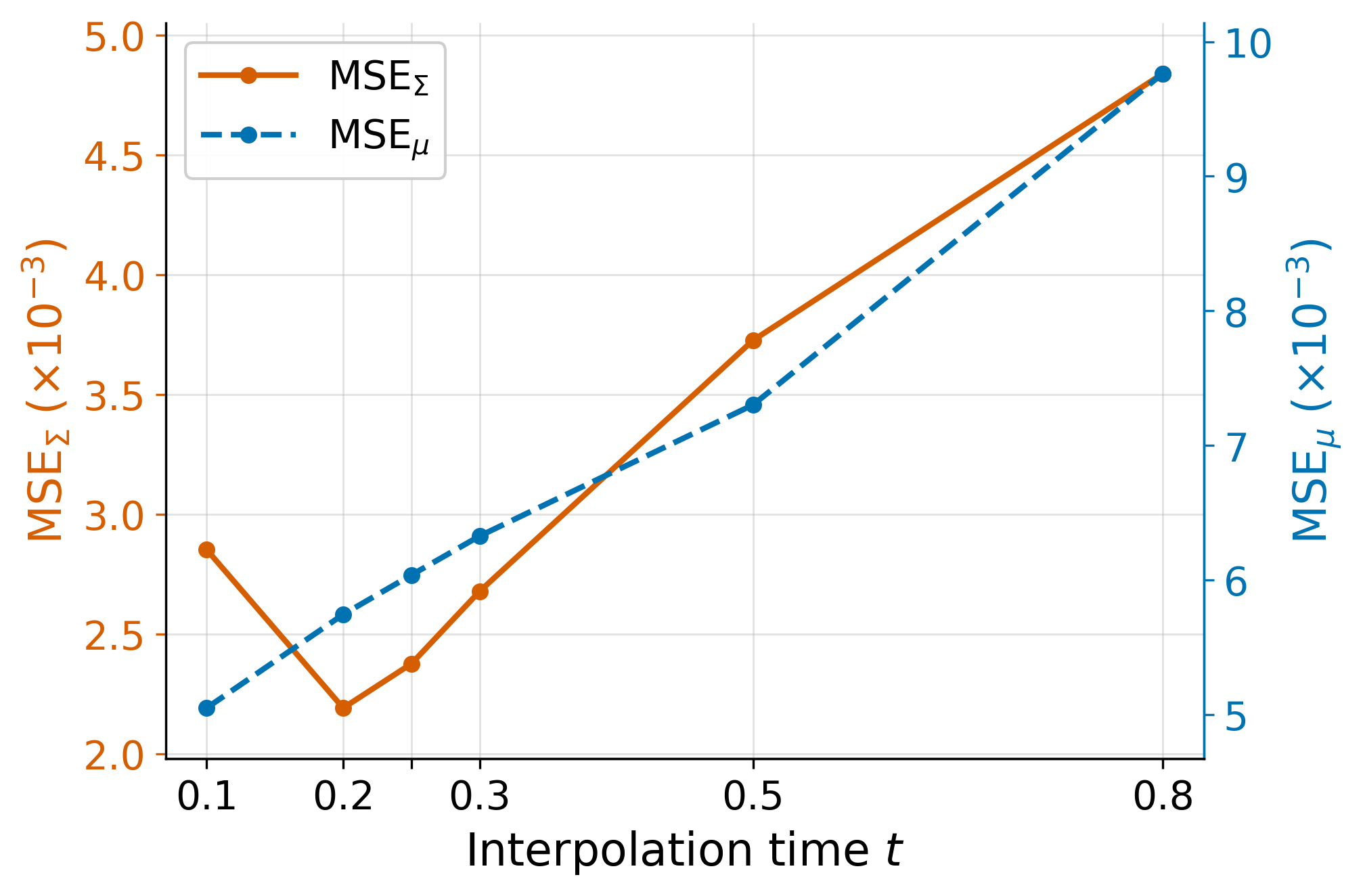}
        \caption{Estimation accuracy across interpolation time $t$.}
        \label{fig:t_sweep}
    \end{minipage}
    \hfill
    \begin{minipage}[c]{0.31\linewidth}
        \setlength{\abovecaptionskip}{1pt}
        \centering
        \includegraphics[width=\linewidth]{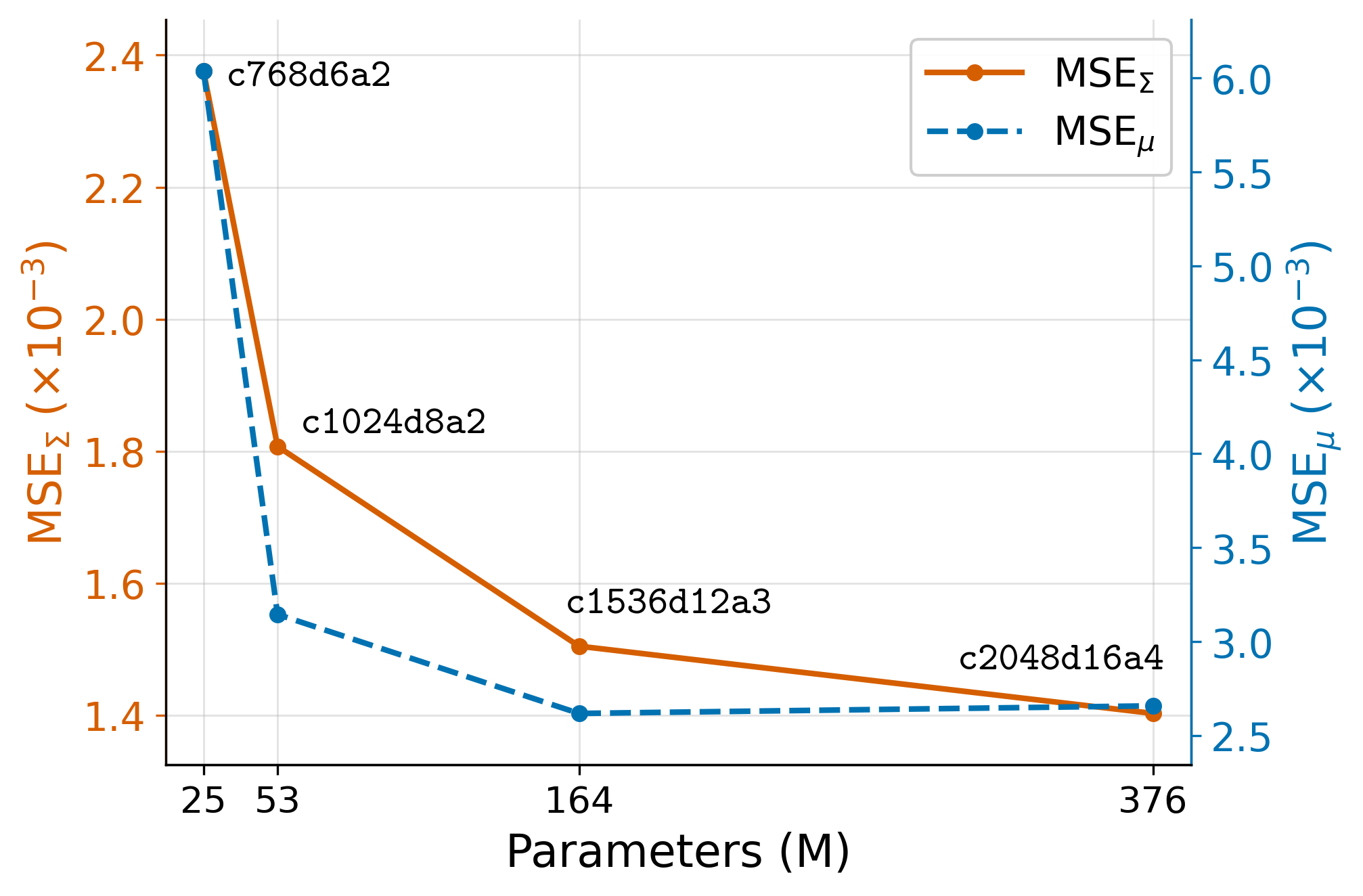}
        \caption{Estimation accuracy across amortizer MLP sizes.}
        \label{fig:model_size}
    \end{minipage}
    \hfill
    \begin{minipage}[c]{0.31\linewidth}
        \centering
        \small
        \setlength{\tabcolsep}{2pt}
        \begin{tabular}{lcc}
            \toprule
            Variant & MSE$_\mu$ & MSE$_\Sigma$ \\
            \midrule
            AMFD-C & 5.75 & 2.18 \\
            AMFD-U & 0.0106 & 0.114 \\
            \bottomrule
        \end{tabular}
        \captionof{table}{Estimation accuracy with (AMFD-C) and without (AMFD-U) class conditioning (MSE $\times 10^{-3}$).}
        \label{tab:condition}
    \end{minipage}
\end{figure}

To isolate estimation accuracy, we train the amortizer solely on real Inception-v3~\citep{szegedy2016rethinking} features. We evaluate its performance using MSE$_\mu$ and MSE$_\Sigma$ to measure the discrepancy against the analytical ground truth (details in Appendix~\ref{app:details_ablation}).

\textbf{Effect of interpolation time \(t\).}
Unlike standard diffusion training that learns over the entire time continuum, Eq.~\ref{eq:affine-covariance-operator} shows that the target moments can be perfectly identified at any fixed \(t\in(0,1)\). By sweeping \(t\) across this interval, we observe that smaller \(t\) values yield consistently lower amortization errors for both the mean and covariance (Figure~\ref{fig:t_sweep}).

\textbf{Scaling model capacity.}
We evaluate estimation accuracy across varying MLP capacities. As depicted in Figure~\ref{fig:model_size}, increasing model size steadily reduces approximation error. However, as $\mathrm{MSE}_\mu$ plateaus and $\mathrm{MSE}_\Sigma$ gains diminish at the \texttt{c2048d16a4} scale, we adopt it as our default.

\textbf{Effect of class conditioning.} As shown in Table~\ref{tab:condition}, the unconditional setting (AMFD-U) achieves considerably higher estimation accuracy than its conditional counterpart (AMFD-C). This gap likely stems from the inherent simplicity of modeling a single global marginal distribution, compounded by the sparse per-class data density that limits statistical efficiency during conditional matching.

\subsection{ImageNet Post-Training}
\label{sec:imagenet-post-training}
\begin{table}[t]
\centering
\caption{System-level comparison for class-conditional generation on ImageNet 256$\times$256. $\dagger$ indicates using CFG interval. \protect\colorbox{orange!30}{FD-loss baseline} and \protect\colorbox{green!40}{AMFD} employ SIM (SigLIP, Inception, MAE).}
\label{tab:system-level-amfd}
\scriptsize
\begin{tabular}{llccccccc}
\toprule
Method & Space & NFE & Params & FDr\(^6\)\(\downarrow\) & FID\(\downarrow\) & IS\(\uparrow\) & Precision\(\uparrow\) & Recall\(\uparrow\) \\
\midrule

\color{gray!70} 50k validation images
& \color{gray!70} - & \color{gray!70} - & \color{gray!70} - & \color{gray!70} 1.00 & \color{gray!70} 1.68 & \color{gray!70} 232.2 & \color{gray!70} 0.75 & \color{gray!70} 0.66 \\

\midrule
\textit{Multi-step models} \\
VAR-d30~\citep{tian2024visual}
& discrete & \(10\times2\) & 2B & 6.70 & 1.97 & 304.6 & 0.82 & 0.59 \\
BAR-L~\citep{yu2026autoregressive}
& discrete & \(256\times 2 \text{\tiny $\times 4$}\) & 1.1B & 3.57 & 1.01 & 281.9 & 0.77 & 0.68 \\

SiT-XL/2~\citep{ma2024sit}
& latent & \(250\times 2\) & 675M & 8.44 & 2.12 & 256.7 & 0.81 & 0.60 \\
MAR-L~\citep{li2024autoregressive}
& latent & \(256\times 2 \text{\tiny $\times 100$}\) & 478M & 6.68 & 1.80 & 293.4 & 0.80 & 0.60 \\
FlowAR-H~\citep{ren2024flowar}
& latent & \(50\times 2^\dagger\) & 1.9B & 6.13 & 1.68 & 274.1 & 0.80 & 0.62 \\
MAR-H~\citep{li2024autoregressive}
& latent & \(256\times 2 \text{\tiny $\times 100$}\) & 942M & 5.61 & 1.56 & 299.5 & 0.80 & 0.62 \\
MAR-L, DeTok~\citep{yang2025latent}
& latent & \(256\times 2 \text{\tiny $\times 100$}\) & 478M & 5.49 & 1.39 & 306.2 & 0.81 & 0.62 \\
REG~\citep{wu2026representation}
& latent & \(250\times 2^\dagger\) & 685M & 4.64 & 1.54 & 302.9 & 0.78 & 0.62 \\
SiT-XL/2-REPA~\citep{yu2024representation}
& latent & \(250\times 2^\dagger\) & 675M & 5.45 & 1.42 & 306.1 & 0.80 & 0.65 \\
LightningDiT~\citep{yao2025reconstruction}
& latent & \(250\times 2\) & 675M & 4.57 & 1.42 & 294.3 & 0.80 & 0.64 \\
DDT-XL~\citep{wang2026ddt}
& latent & \(250\times 2\) & 675M & 5.70 & 1.26 & 309.3 & 0.79 & 0.66 \\
REPA-E~\citep{leng2025repa}
& latent & \(250\times 2^\dagger\) & 676M & 3.04 & 1.17 & 298.3 & 0.79 & 0.66 \\
RAE-XL~\citep{zheng2025diffusion}
& latent & \(50\times 2^\dagger\) & 839M & 3.26 & 1.16 & 261.0 & 0.77 & 0.67 \\
PixNerd-XL~\citep{wang2025pixnerd}
& pixel & \(100\times 2\) & 1.0B & 5.01 & 2.10 & 318.8 & 0.81 & 0.59 \\
JiT-B~\citep{li2026back}
& pixel & \(50\times 2\times 2^\dagger\) & 131M & 15.65 & 3.71 & 269.0 & 0.81 & 0.50 \\
+ FD-loss~\citep{yang2026representation}
& pixel & 1 & 131M & \cellcolor{orange!30}5.53 & 1.00 & 344.6 & 0.78 & 0.60 \\
\color{gray!70} + AMFD-C (Ours) & \color{gray!70} pixel & \color{gray!70} 1 & \color{gray!70} 131M & \color{gray!70} 4.75 & \color{gray!70} 0.95 & \color{gray!70} 319.4 & \color{gray!70} 0.77 & \color{gray!70} 0.67 \\
+ AMFD-U (Ours)
& pixel & 1 & 131M & \cellcolor{green!40}3.91 & 0.95 & 325.2 & 0.77 & 0.60 \\
JiT-L~\citep{li2026back}
& pixel & \(50\times 2\times 2^\dagger\) & 459M & 10.73 & 2.59 & 288.5 & 0.79 & 0.59 \\
+ FD-loss~\citep{yang2026representation}
& pixel & 1 & 459M & \cellcolor{orange!30}3.24 & 0.77 & 317.3 & 0.77 & 0.66 \\
\color{gray!70} + AMFD-C (Ours) & \color{gray!70} pixel & \color{gray!70} 1 & \color{gray!70} 459M & \color{gray!70} 2.67 & \color{gray!70} 0.87 & \color{gray!70} 325.1 & \color{gray!70} 0.76 & \color{gray!70} 0.69 \\
+ AMFD-U (Ours)
& pixel & 1 & 459M & \cellcolor{green!40}2.02 & 0.85 & 319.8 & 0.77 & 0.65 \\
JiT-H~\citep{li2026back}
& pixel & \(50\times 2\times 2^\dagger\) & 953M & 7.66 & 1.97 & 296.0 & 0.78 & 0.63 \\
+ FD-loss~\citep{yang2026representation}
& pixel & 1 & 953M & \cellcolor{orange!30}2.65 & 0.75 & 313.0 & 0.76 & 0.66 \\
\color{gray!70} + AMFD-C (Ours) & \color{gray!70} pixel & \color{gray!70} 1 & \color{gray!70} 953M & \color{gray!70} 2.15 & \color{gray!70} 0.85 & \color{gray!70} 328.1 & \color{gray!70} 0.76 & \color{gray!70} 0.69 \\
+ AMFD-U (Ours)
& pixel & 1 & 953M & \cellcolor{green!40}1.79 & 0.83 & 312.2 & 0.76 & 0.67 \\

\midrule
\textit{One-step models} \\
Drift-L~\citep{deng2026generative}
& latent & 1 & 463M & 10.92 & 1.53 & 257.2 & 0.79 & 0.63 \\
Drift-L~\citep{deng2026generative}
& pixel & 1 & 465M & 10.51 & 1.43 & 305.8 & 0.81 & 0.60 \\
iMF-XL~\citep{geng2026improved}
& latent & 1 & 610M & 8.39 & 1.82 & 278.9 & 0.78 & 0.63 \\
+ FD-loss~\citep{yang2026representation}
& latent & 1 & 610M & 2.45 & 0.76 & 301.3 & 0.77 & 0.67 \\

pMF-B~\citep{lu2026one}
& pixel & 1 & 118M & 13.70 & 3.31 & 254.6 & 0.81 & 0.52 \\
+ FD~\citep{yang2026representation}
& pixel & 1 & 118M & \cellcolor{orange!30}3.50 & 0.85 & 331.4 & 0.77 & 0.65 \\
\color{gray!70} + AMFD-C (Ours) & \color{gray!70} pixel & \color{gray!70} 1 & \color{gray!70} 118M & \color{gray!70} 3.94 & \color{gray!70} 0.95 & \color{gray!70} 315.9 & \color{gray!70} 0.77 & \color{gray!70} 0.66 \\
+ AMFD-U (Ours)
& pixel & 1 & 118M & \cellcolor{green!40}3.43 & 0.92 & 310.1 & 0.77 & 0.65 \\
pMF-L~\citep{lu2026one}
& pixel & 1 & 410M & 9.09 & 2.72 & 261.7 & 0.81 & 0.56 \\
+ FD-loss~\citep{yang2026representation}
& pixel & 1 & 410M & \cellcolor{orange!30}2.09 & 0.78 & 309.2 & 0.76 & 0.67 \\
\color{gray!70} + AMFD-C (Ours) & \color{gray!70} pixel & \color{gray!70} 1 & \color{gray!70} 410M & \color{gray!70} 2.25 & \color{gray!70} 0.88 & \color{gray!70} 321.8 & \color{gray!70} 0.76 & \color{gray!70} 0.68 \\
+ AMFD-U (Ours)
& pixel & 1 & 410M & \cellcolor{green!40}2.01 & 0.86 & 306.7 & 0.76 & 0.68 \\
pMF-H~\citep{lu2026one}
& pixel & 1 & 935M & 6.87 & 2.29 & 267.2 & 0.80 & 0.59 \\
+ FD-loss~\citep{yang2026representation}
& pixel & 1 & 935M & \cellcolor{orange!30}1.89 & 0.77 & 310.1 & 0.77 & 0.68 \\
\color{gray!70} + AMFD-C (Ours) & \color{gray!70} pixel & \color{gray!70} 1 & \color{gray!70} 935M & \color{gray!70} 1.93 & \color{gray!70} 0.86 & \color{gray!70} 323.0 & \color{gray!70} 0.76 & \color{gray!70} 0.69 \\
+ AMFD-U (Ours)
& pixel & 1 & 935M & \cellcolor{green!30}1.75 & 0.85 & 307.3 & 0.76 & 0.68 \\

\bottomrule
\end{tabular}
\end{table}

\begin{figure}[t!]
    \centering
    \begin{minipage}[c]{0.34\linewidth}
        \setlength{\abovecaptionskip}{1pt}
        \centering
        \includegraphics[width=\linewidth]{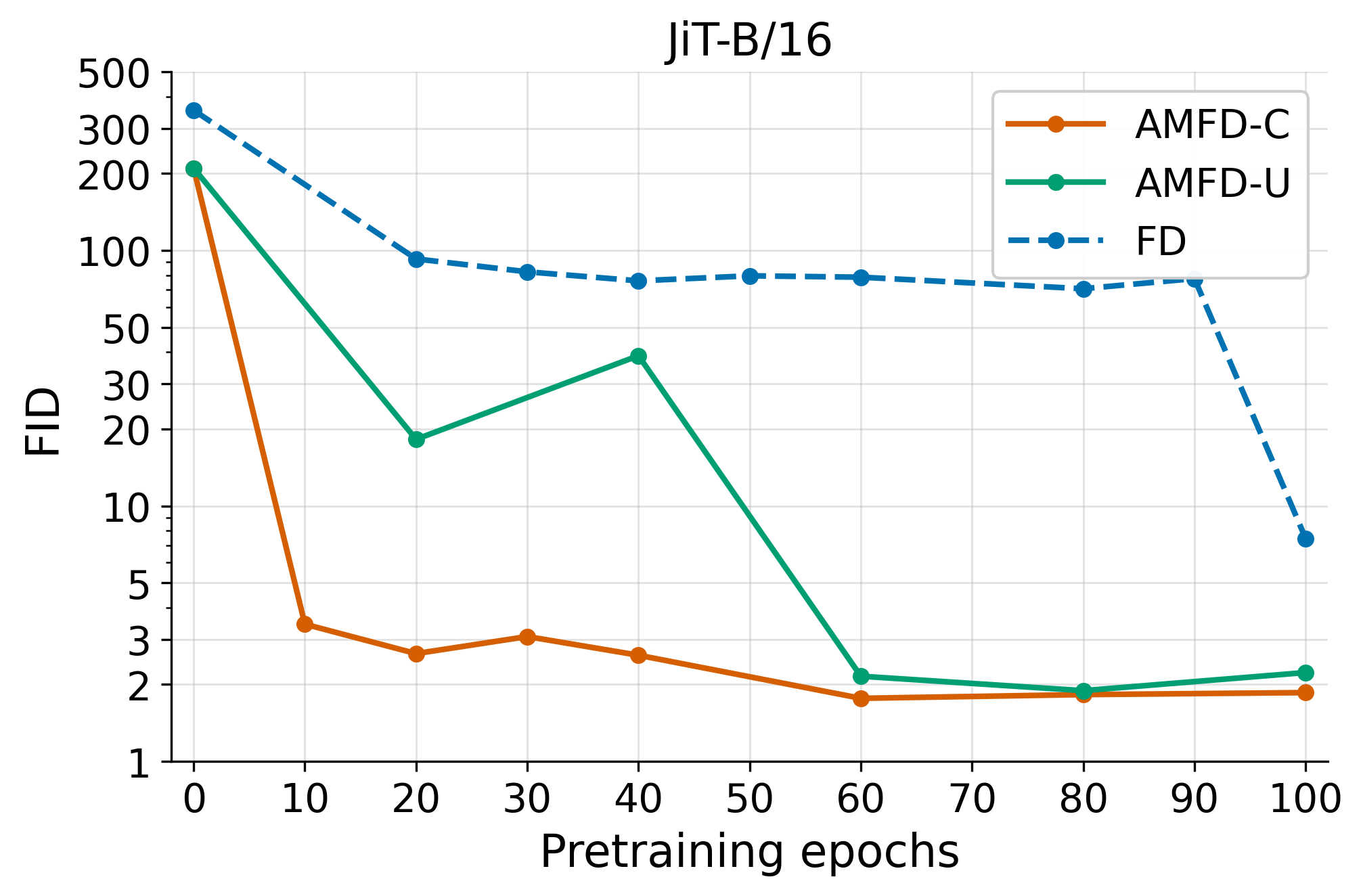}
        \caption{Impact of pretraining epochs on JiT-B/16 post-training.}
        \label{fig:jitb_pretrain}
    \end{minipage}
    \hfill
    \begin{minipage}[c]{0.34\linewidth}
        \setlength{\abovecaptionskip}{1pt}
        \centering
        \includegraphics[width=\linewidth]{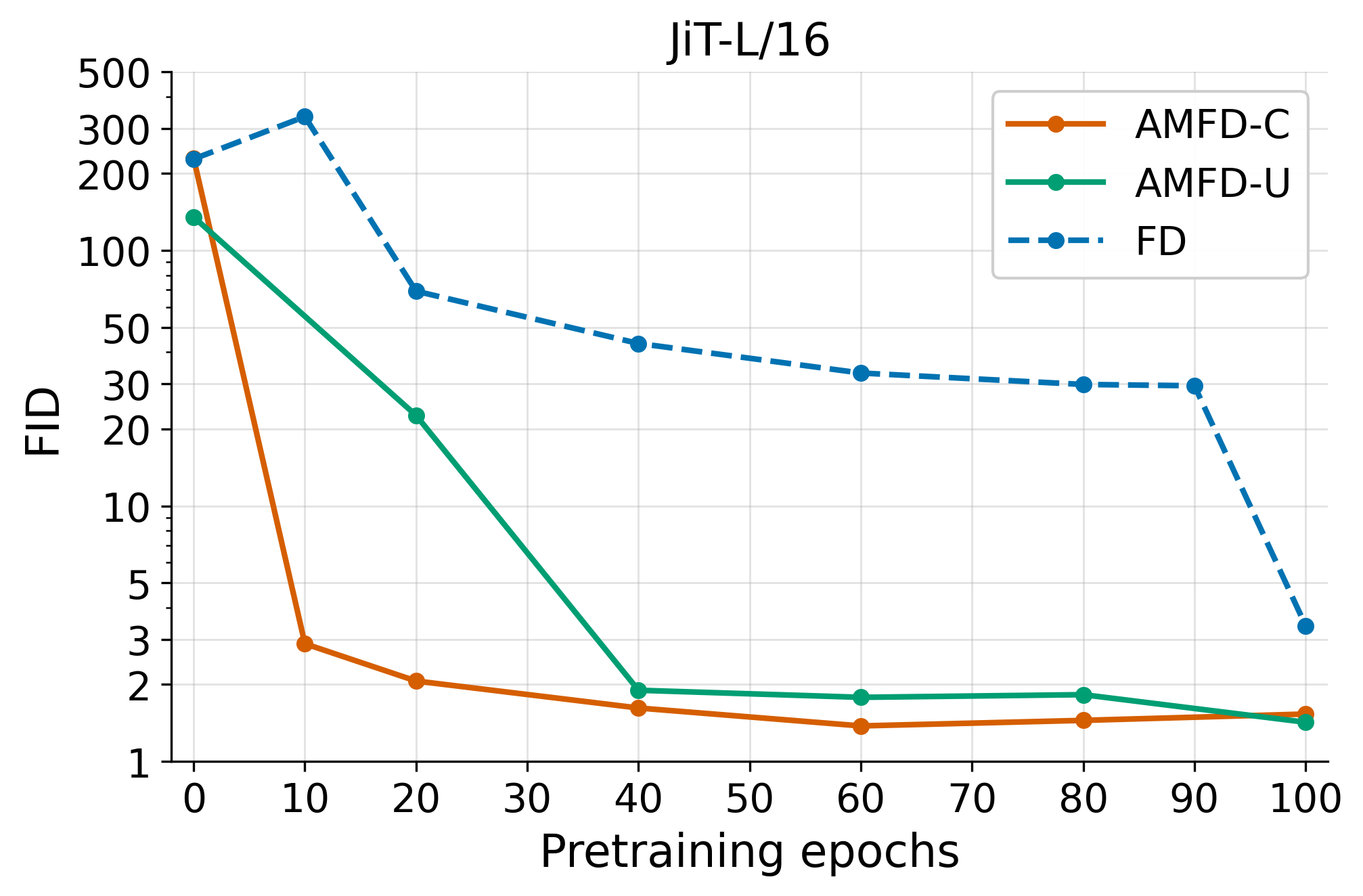}
        \caption{Impact of pretraining epochs on JiT-L/16 post-training.}
        \label{fig:jitl_pretrain}
    \end{minipage}
    \hfill
    \begin{minipage}[c]{0.26\linewidth}
        \setlength{\abovecaptionskip}{2pt}
        \centering
        \includegraphics[width=\linewidth]{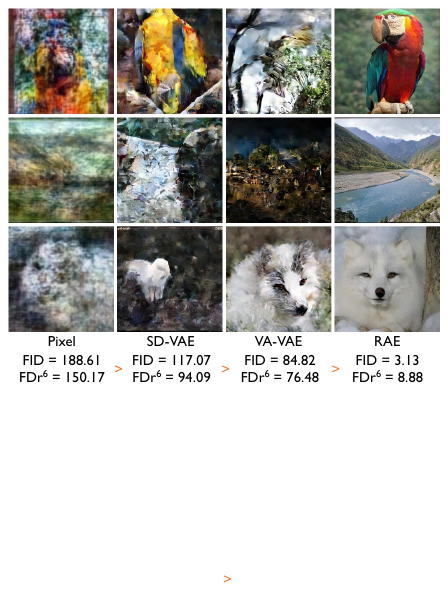}
        \caption{AMFD in native generative spaces.}
        \label{fig:native_space}
    \end{minipage}
\end{figure}
\textbf{Superior dynamics and results when matching in global representation spaces.} 
We evaluate AMFD for ImageNet post-training using both the multi-step JiT and one-step pMF models. As shown in Table~\ref{tab:system-level-amfd}, AMFD-U substantially improves the FDr$^6$ score over the FD-loss baseline across both architectures (see Appendix~\ref{app:add_metrics} for full FDr metrics and Appendix~\ref{app:visualization} for qualitative samples). The table also reveals that AMFD-C slightly underperforms AMFD-U in final generation quality. We attribute this to two intertwined factors. First, amortizing unconditional marginal moments is an inherently easier optimization task than fitting fine-grained conditional moments, a difference in estimation accuracy explicitly corroborated in Table~\ref{tab:condition}. Second, the discrete class labels in ImageNet represent a relatively simple semantic condition; in such low-complexity conditional spaces, the marginal benefits of conditional matching are outweighed by the increased estimation difficulty. We further study the robustness of AMFD against FD-loss when the base model is far from a one-step generative regime by analyzing post-training convergence across varying pre-training checkpoints. Notably, while FD-loss completely diverges on early-stage checkpoints, both AMFD variants consistently converge to competitive FIDs (Figures~\ref{fig:jitb_pretrain} and \ref{fig:jitl_pretrain}). This demonstrates that the primary driver of robustness is the neural formulation itself. Furthermore, AMFD-C achieves strong FID scores with even fewer pre-training epochs than AMFD-U. This indicates that, despite being harder to fit perfectly, precise class-aware supervision provides further training stabilization.

\textbf{Exploration in native generative space.}
Empowered by the space complexity efficiency of AMFD, we further investigate whether matching solely the first and second moments within native generative spaces is sufficient to capture the target data distribution. Our evaluation spans four distinct environments with increasing semantic density: raw pixel space, SD-VAE~\citep{rombach2022high} latent space, VA-VAE~\citep{yao2025reconstruction} latent space, and RAE~\citep{zheng2025diffusion} latent space. As detailed in Figure~\ref{fig:native_space}, both qualitative visualizations and quantitative metrics (FID and FDr$^6$) reveal that generative performance improves progressively as the semantic strength of the space increases. Notably, while the lower-semantic spaces struggle to produce coherent structures, the RAE latent space successfully yields reasonable visual quality with a competitive FID of $3.13$. This contrast suggests that matching first- and second-order statistics becomes progressively more effective as the representation space encodes richer and more structured semantics.

\subsection{Text-to-Image Post-Training}
\label{sec:t2i-post-training}

\begin{table}[t]
\centering
\caption{GenEval and PickScore for text-to-image post-training. 
Higher is better for all metrics.}
\label{tab:text-to-image}
\setcounter{rownum}{0} 
\scriptsize
\setlength{\tabcolsep}{3.3pt}
\begin{tabular}{llccccccccc}
\toprule
Row
& Method 
& NFE
& Single Obj. 
& Two Obj. 
& Counting 
& Colors 
& Position 
& Color Attr. 
& Overall 
& PickScore \\
\midrule
& \multicolumn{2}{l}{\textit{Pixel space, multi-step}} \\
\rowidx & PixelGen & 25$\times$2$\times$2
& 0.994 & 0.886 & 0.578 & 0.902 & 0.708 & 0.698 & \textbf{0.794} & \textbf{21.02} \\
\rowidx & FD-SIM~\citep{yang2026representation} & 1
& 0.984 & 0.795 & 0.416 & 0.753 & 0.535 & 0.415 & 0.650 & 20.33 \\
\rowidx & AMFD-U-SIM (Ours) & 1
& 0.975 & 0.831 & 0.469 & 0.769 & 0.658 & 0.578 & 0.713 & 20.68 \\
\rowidx & AMFD-C-SIM (Ours) & 1
& 0.969 & 0.864 & 0.513 & 0.846 & 0.723 & 0.645 & \underline{0.760} & \underline{20.80} \\
\midrule
& \multicolumn{2}{l}{\textit{VAE latent space, multi-step}} \\
\rowidx & FLUX.2 [klein] 4B Base & 50$\times$2
& 1.000 & 0.864 & 0.759 & 0.899 & 0.505 & 0.642 & 0.778 & \textbf{21.50} \\
\rowidx & FD-SIM~\citep{yang2026representation} & 1
& 1.000 & 0.934 & 0.653 & 0.824 & 0.448 & 0.598 & 0.743 & 21.31 \\
\rowidx & AMFD-U-SIM (Ours) & 1
& 0.997 & 0.917 & 0.644 & 0.827 & 0.523 & 0.590 & 0.749 & 21.37 \\
\rowidx & AMFD-C-SIM (Ours) & 1
& 1.000 & 0.957 & 0.750 & 0.910 & 0.733 & 0.650 & \underline{0.833} & 21.41 \\
\rowidx & AMFD-C-10 enc. (Ours) & 1
& 1.000 & 0.962 & 0.766 & 0.886 & 0.720 & 0.685 & \textbf{0.836} & \underline{21.46} \\
\midrule
& \multicolumn{2}{l}{\textit{VAE latent space, few-step}} \\
\rowidx & FLUX.2 [klein] 4B & 4
& 0.994 & 0.904 & 0.791 & 0.880 & 0.575 & 0.623 & 0.794 & \textbf{21.85} \\
\rowidx & DMD2~\citep{yin2025improved} & 1
& 0.997 & 0.894 & \textbf{0.806} & 0.864 & 0.603 & 0.660 & 0.804 & - \\
\rowidx & FD-SIM~\citep{yang2026representation} & 1
& \textbf{1.000} & 0.944 & 0.716 & 0.878 & 0.618 & 0.658 & 0.802 & 21.62 \\
\rowidx & iRDM~\citep{feng2026representation} & 1
& 0.994 & 0.924 & 0.756 & \textbf{0.923} & 0.650 & 0.708 & 0.826 & \underline{21.82} \\
\rowidx & AMFD-U-SIM (Ours) & 1
& 0.994 & 0.929 & 0.741 & 0.902 & 0.638 & 0.678 & 0.813 & 21.77 \\
\rowidx & AMFD-C-SIM (Ours) & 1
& 0.997 & 0.955 & 0.791 & \textbf{0.923} & 0.670 & 0.740 & \textbf{0.846} & \textbf{21.85} \\
\rowidx & AMFD-C-10 enc. (Ours) & 1
& \textbf{1.000} & \textbf{0.957} & 0.778 & 0.920 & \textbf{0.680} & \textbf{0.733} & \underline{0.845} & \underline{21.82} \\
\bottomrule
\end{tabular}
\end{table}

We evaluate AMFD on text-to-image generation at 512$\times$512 resolution using the GenEval~\citep{ghosh2023geneval} (instruction following) and PickScore~\citep{kirstain2023pick} (human preference) benchmarks across diverse architectures: the pixel-space PixelGen~\citep{ma2026pixelgen}, the latent-space FLUX.2 [klein] 4B Base~\citep{blackforestlabs2026flux2klein}, and its 4-step distilled counterpart. As shown in Table~\ref{tab:text-to-image}, our AMFD establishes a new paradigm with several pivotal insights:

\textbf{Inherent stability in complex T2I.} Even without conditioning, AMFD-U-SIM consistently outperforms the FD-SIM baseline across all architectures on both GenEval and PickScore (Rows 2 vs. 3, 6 vs. 7, and 12 vs. 14). This confirms that our amortized formulation inherently provides better training dynamics in highly complex text-to-image scenarios.

\textbf{The critical role of conditioning.} When post-training using a few-step teacher (which inherently operates closer to a one-step generator and already possesses robust conditional alignment), AMFD-U-SIM performs remarkably well (Row 14). However, when using a standard multi-step diffusion model, condition-aware matching becomes absolutely crucial. For instance, in the multi-step FLUX.2 Base, AMFD-C-SIM dramatically boosts the GenEval score from $0.749$ to $0.833$ compared to its unconditional counterpart (Rows 7 vs. 8), unlocking similarly massive gains in PixelGen ($0.713$ to $0.760$, Rows 3 vs. 4). This completely validates the necessity of condition-aware matching for standard diffusion distillation—a stark contrast to our ImageNet findings, driven by the vastly higher semantic complexity of text prompts compared to class labels.

\textbf{Sufficiency of the first two moments.} FD-loss~\citep{yang2026representation} has empirically established that matching the first two statistical moments yields strong generative performance. We further corroborate this by comparing our AMFD (Rows 15 \& 16) with the recent iRDM~\citep{feng2026representation} (Row 13), which theoretically matches \emph{all} statistical moments via an MMD criterion. By achieving superior GenEval scores and equal or better PickScores, our results further demonstrate that, when empowered by neural amortization, aligning merely the first two moments already provides abundant supervisory signals.

\textbf{Effectiveness of the compact SIM set.} Expanding the representation set from SIM to 10 encoders (Table~\ref{tab:repr_encoders}, similar to~\citet{feng2026representation}) yields comparable performance across both FLUX.2 [klein] models (Rows 8 vs. 9; 15 vs. 16). While standard metrics may not capture all real-world visual nuances, this suggests the lightweight SIM ensemble already provides sufficient features for strong text-to-image alignment, offering a practical balance of efficiency and quality.

Ultimately, these combined advantages allow our one-step AMFD models to significantly surpass both the multi-step FLUX.2 [klein] 4B Base teacher and its 4-step distilled variant on GenEval, while achieving highly competitive, on-par performance on PickScore (\textit{e.g.}, matching the $21.85$ PickScore of the 4-step teacher). This demonstrates the formidable capability of our method. Qualitative results are in Figure~\ref{fig:t2i-samples} and Appendix Figure~\ref{fig:add-t2i}.
\section{Limitations}
\label{sec:limitations}

While effective, our study presents several limitations. First, although our theory accommodates arbitrary-order moments, AMFD is restricted to matching only the first two moments, leaving higher-order statistics uncaptured. Second, our reliance on off-the-shelf representation encoders, which are typically pretrained at low resolutions, bottlenecks the direct applicability to high-resolution generation. Finally, AMFD serves as a post-training objective, while it is currently unable to train generative models entirely from scratch.

\section{Conclusion}
\label{sec:conclusion}
In this work, we introduced \emph{amortized moment matching}, a principled framework that leverages neural amortizers to provide scalable distributional training signals for generative models. By analyzing diffusion denoisers through the lens of polynomial projections, we established that they inherently learn a hierarchy of conditional statistics. We instantiated the tractable second-order case as the Amortized Fr\'{e}chet Distance (AMFD) loss, a post-training paradigm designed to elevate the synthesis quality of existing models and convert multi-step models into efficient one-step generators. By efficiently matching conditional means and covariance operators, AMFD yields superior one-step generation performance in ImageNet post-training. Furthermore, it scales seamlessly to complex text-to-image architectures across both pixel and VAE latent spaces, demonstrating strong instruction-following proficiencies and high-fidelity generation.


\bibliography{iclr2026_conference}
\bibliographystyle{iclr2026_conference}

\newpage
\appendix
\section*{Appendix}
\section{Related Work}
\label{sec:related}

\subsection{Diffusion and Flow Matching}

Diffusion models~\citep{sohl2015deep,song2019generative,ho2020denoising} and flow matching~\citep{liu2022flow,lipman2022flow} typically learn unrestricted denoising functions or velocity fields to iteratively transport samples from a prior to the data distribution. From a statistical perspective, classical Tweedie identities connect the optimal Gaussian denoiser to posterior moments conditioned on a particular noisy observation~\citep{manor2024posterior,boys2023tweedie}. Our perspective differs fundamentally: by restricting the denoising regression to polynomial function classes, we study its projection coefficients. Crucially, these coefficients depend on the conditional data distribution, rather than on an individual noised sample, and form a hierarchy that identifies increasingly high-order data moments. Accordingly, we do not use the learned denoiser for generative sampling. Instead, we utilize these amortized coefficients as explicit distributional training signals for a separate generator.

\subsection{Moment Matching in Representation Spaces}

Moment matching provides a direct approach to generative modeling by comparing statistics of real and generated distributions. Traditional methods rely on nonparametric frameworks, optimizing empirical metrics such as kernel mean embeddings and maximum mean discrepancy (MMD) from finite sample sets~\citep{sriperumbudur2010hilbert,gretton2012kernel,li2015generative,dziugaite2015training,li2017mmd}. A complementary line of work demonstrates that pretrained visual encoders provide highly effective spaces for generative supervision. Perceptual objectives~\citep{zhang2018unreasonable} compare paired images using frozen deep features, projected GANs~\citep{sauer2021projected} construct discriminators on top of pretrained representations, representation alignment methods~\citep{yu2024representation} supervise internal generative features, and representation encoders~\citep{zheng2025diffusion} train generative models natively in representation spaces. Recent work has begun to directly optimize discrepancies between real and generated distributions in representation spaces. Drifting Models~\citep{deng2026generative} construct a kernel-induced field that evolves the generated distribution during training. FD-loss~\citep{yang2026representation} demonstrates that representation-space means and covariances can be explicitly optimized by decoupling population-level moment estimation from generator minibatches. Several concurrent works have also explored representation-space distribution matching to enable strong one-step generation. Specifically, Representation Distribution Matching (RDM)~\citep{feng2026representation} utilizes MMD estimators, while Three-Body Scattering Modeling (TBSM)~\citep{sun2026three} employs an online tracker to learn an energy-distance descent field. Rather than estimating empirical marginal statistics or learning sample-dependent transport fields, AMFD amortizes condition-level moment coefficients derived from polynomial projections of a diffusion denoiser.

\subsection{Distillation and Post-Training for One-Step Generation}

State-of-the-art generative paradigms, such as diffusion models~\citep{sohl2015deep,song2019generative,ho2020denoising} and flow matching~\citep{liu2022flow,lipman2022flow}, achieve unprecedented synthesis quality but typically rely on slow, iterative generation processes. To address this, generative distillation and post-training techniques have been extensively developed to compress these iterative processes into efficient one-step generators. Existing approaches can be broadly categorized into several directions. Direct distillation methods~\citep{luhman2021knowledge,salimans2022progressive} explicitly matches the input-output or intermediate mappings of a multi-step teacher. Trajectory consistency methods~\citep{song2023consistency,boffi2024flow,liu2025learning,geng2025mean,zhou2025inductive} enforce consistency along the teacher's deterministic ODE trajectory. GAN-based post-training methods~\citep{sauer2024fast,lin2025diffusion} leverage adversarial objectives to directly align the one-step generated distribution with the real data distribution. Furthermore, variational score distillation (VSD)~\citep{wang2023prolificdreamer,yin2024one} optimize the one-step generator by estimating and matching the score functions of the target and generated distributions. Although AMFD shares the alternating optimization structure of adversarial and VSD-based post-training, its amortizers serve a fundamentally different statistical role. A GAN discriminator learns to classify real versus generated samples, and VSD-style networks estimate the full score fields of intermediate noisy distributions. In contrast, AMFD neural amortizers directly regress specific statistical targets: conditional means and covariance-dependent operators. Consequently, AMFD bypasses traditional adversarial dynamics and the necessity to reproduce a fixed teacher trajectory, offering a principled, condition-aware approach to one-step generation.

\section{Proofs}
\label{sec:proof}
In this section, we present the formal statement and proof of Theorem~\ref{thm:denoising-projection}, alongside the proof of Corollary~\ref{cor:affine-covariance}.

\begin{theorem}[Denoising Projection Moment Hierarchy, Formal Statement]
\label{thm:app-denoising-projection}
Fix a condition $c$ and $t\in(0,1)$. Let $X_1\sim P(\cdot\mid c)$ and $X_0\sim \mathcal N(0,I)$ be independent variables. Define the interpolant $Z=tX_1+(1-t)X_0$ and the target velocity $V=X_1-X_0$, assuming $\mathbb E[\|V\|^2\mid c]<\infty$. Let $\mathcal H = L^2(P_{Z\mid c};\mathbb R^D)$ be the Hilbert space of square-integrable vector fields, wherein the unconstrained flow-matching solution is given by $v^\star = \mathbb E[V\mid Z,c]$.

For any integer $n\ge 0$, let $\mathcal H_n \subset \mathcal H$ be the subspace of vector-valued polynomials in $Z$ up to degree $n$. The optimal degree-$n$ polynomial denoiser,
\begin{equation}
v_n^\star = \arg\min_{v\in\mathcal H_n} \mathbb E\left[\|v(Z)-V\|^2\mid c\right],
\end{equation}
satisfies $v_n^\star = \Pi_n v^\star$, where $\Pi_n$ is the orthogonal projection onto $\mathcal H_n$. If the polynomial space $\bigcup_{n\ge 0}\mathcal H_n$ is dense in $\mathcal H$, these projections converge strongly to $v^\star$, admitting the orthogonal decomposition $v^\star = \sum_{n=0}^\infty \Delta_n$ in $L^2(P_{Z\mid c})$, where $\Delta_0 = v_0^\star$ and $\Delta_n = v_n^\star - v_{n-1}^\star$.

Furthermore, by the first-order optimality over $\mathcal H_n$, $v_n^\star$ satisfies the normal equations, thereby perfectly matching the target moment tensors of the unconstrained velocity:
\begin{equation}
\mathbb E\left[ v_n^\star(Z)\otimes Z^{\otimes k} \mid c \right]
=
G_k^t(P\mid c)
:=
\mathbb E\left[ V \otimes Z^{\otimes k} \mid c \right],
\qquad
k=0,\ldots,n.
\end{equation}
Writing the $j$-th conditional data moment as $M_j(P\mid c)=\mathbb E[X_1^{\otimes j}\mid c]$, the target tensors $G_k^t$ obey the triangular recurrence relation:
\begin{equation}
G_k^t(P\mid c) = t^k M_{k+1}(P\mid c) + F_k^t(M_0,\ldots,M_k),
\end{equation}
where $F_k^t$ is a deterministic tensor-valued function depending solely on $t$, the known Gaussian moments of $X_0$, and the lower-order data moments. Consequently, the optimal degree-$n$ polynomial denoiser uniquely determines the conditional data moments up to order $n+1$.
\end{theorem}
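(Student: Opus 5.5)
The plan has three parts: the Hilbert-space facts, then the normal equations, and finally the triangular recurrence followed by identifiability.

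\textbf{Hilbert-space facts.} Work in $\mathcal H=L^2(P_{Z\mid c};\mathbb R^D)$.

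First, I need $\mathcal H_n\subset\mathcal H$. Polynomials of degree at most $n$ are square integrable under $P_{Z\mid c}$ only when $Z$ has finite moments of order $2n$. I will add this as an implicit standing assumption, namely $\mathbb E[\|X_1\|^{2n}\mid c]<\infty$. Under it, $\mathcal H_n$ is a finite-dimensional, hence closed, subspace.

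Second, decompose the risk. Since $v^\star=\mathbb E[V\mid Z,c]$ and $V\in L^2$, the tower property gives $\mathbb E[\langle v(Z)-v^\star(Z),V-v^\star(Z)\rangle\mid c]=0$ for every $v\in\mathcal H$. Therefore
\[
\mathbb E\|v(Z)-V\|^2=\|v-v^\star\|_{\mathcal H}^2+\mathbb E\|V-v^\star\|^2 .
\]
Minimizing over $\mathcal H_n$ is then minimizing the distance to $v^\star$, so $v_n^\star=\Pi_n v^\star$ by the projection theorem.

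Third, convergence. If $\bigcup_n\mathcal H_n$ is dense, the nested projections $\Pi_n$ converge strongly to the identity, so $v_n^\star\to v^\star$. The increments $\Delta_n=(\Pi_n-\Pi_{n-1})v^\star$ are mutually orthogonal because the subspaces are nested. The partial sums telescope to $v_n^\star$, which gives $v^\star=\sum_n\Delta_n$.

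\textbf{Normal equations.} Optimality over $\mathcal H_n$ means $v_n^\star-V\perp\mathcal H_n$, i.e. $\mathbb E[\langle v_n^\star(Z)-V,p(Z)\rangle\mid c]=0$ for all $p\in\mathcal H_n$. Take test functions $p(Z)=e_a\,(Z^{\otimes k})_{i_1\cdots i_k}$, meaning a basis vector times a coordinate monomial of degree $k\le n$. Ranging over $a,i_1,\dots,i_k$ gives exactly the componentwise identity $\mathbb E[v_n^\star\otimes Z^{\otimes k}\mid c]=\mathbb E[V\otimes Z^{\otimes k}\mid c]$ for $k=0,\dots,n$.

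\textbf{Triangular recurrence.} Substitute $V=X_1-X_0$ and $Z=tX_1+(1-t)X_0$, and expand $Z^{\otimes k}$ multilinearly as a sum over subsets $S\subseteq\{1,\dots,k\}$ of the slots filled by $X_1$. Independence of $X_1$ and $X_0$ factorizes each expectation as
\[
\mathbb E\bigl[X_1^{\otimes(|S|+\epsilon)}\bigr]\otimes\mathbb E\bigl[X_0^{\otimes(k-|S|+1-\epsilon)}\bigr],
\]
up to an index permutation, where $\epsilon\in\{0,1\}$ records whether the leading factor is $X_1$ or $-X_0$.
\begin{itemize}
\item The only term involving $M_{k+1}$ comes from taking $X_1$ in the leading slot and $S=\{1,\dots,k\}$, with coefficient $t^k$.
\item Every other term involves $M_j$ with $j\le k$, multiplied by a Gaussian moment tensor of $X_0$ (explicitly known via Isserlis/Wick) and a power of $t$ and $1-t$.
\end{itemize}
Collecting these gives $G_k^t=t^kM_{k+1}+F_k^t(M_0,\dots,M_k)$.

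\textbf{Identifiability.} This is an induction on $k$. The normal equations say $v_n^\star$ determines $G_0^t,\dots,G_n^t$, because $G_k^t=\mathbb E[v_n^\star\otimes Z^{\otimes k}]$. The left-hand side, however, is an expectation under the law of $Z$, which itself depends on $P$. I would resolve this by reading "determines" as: $G_k^t$ are the coefficients recovered from the denoising regression, so the recursion $M_{k+1}=t^{-k}(G_k^t-F_k^t(M_0,\dots,M_k))$, started from $M_0=1$, recovers $M_1,\dots,M_{n+1}$ uniquely since $t>0$.

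To make this rigorous in terms of $v_n^\star$ alone, I would note that the moments $\mathbb E[Z^{\otimes j}]$ for $j\le 2n$ are also needed to form the Gram system. I would then show inductively that they too are polynomial in lower moments, so that the pair consisting of the coefficients of $v_n^\star$ and the Gram data is jointly triangular.

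\textbf{Main obstacle.} I expect the main difficulty to be bookkeeping rather than any new estimate: handling the tensor index permutations in the multilinear expansion cleanly, and stating precisely what "uniquely determines" means given the circularity between $v_n^\star$ and the law of $Z$.
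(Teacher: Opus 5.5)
Your Hilbert-space argument, the normal equations obtained by testing against $e_a$ times coordinate monomials, the multilinear expansion with independence factorization, and the back-substitution using $t^k>0$ are exactly the paper's proof. Your first reading of ``determines'', which treats the matched tensors $G_0^t,\dots,G_n^t$ as the recovered data, is also the reading the paper's proof adopts. Your extra hypothesis $\mathbb{E}[\|X_1\|^{2n}\mid c]<\infty$, needed for $\mathcal H_n\subset\mathcal H$, is a legitimate point that the paper leaves implicit.

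Do not pursue the planned upgrade to a statement ``in terms of $v_n^\star$ alone''. This is not a bookkeeping matter: the claim is false for $n\ge 2$. Write $v_n^\star(z)=\sum_{j\le n}C_j z^{\otimes j}$. The $k$-th normal equation is $\sum_j C_j\,\mathbb{E}[Z^{\otimes(j+k)}\mid c]=G_k^t$, whose left side already involves $M_n$ at $k=0$ and $M_{2n}$ at $k=n$. So the system in the unknowns $M_1,\dots,M_{2n}$ is not triangular.

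Concretely, take $D=1$, $n=2$, and fix coefficients $(c_0,c_1,c_2)$ with $c_2\neq 0$. The three normal equations are then linear in $(m_1,\dots,m_4)$. The first involves only $m_1,m_2$, with $m_2$-coefficient $c_2t^2$. The second adds $m_3$, with coefficient $c_2t^3$. The third adds $m_4$, with coefficient $c_2t^4$. Hence their solution set is a line along which $m_1$ is free.

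Now start from a law whose Hankel matrix $[m_{i+j}]_{i,j=0}^{2}$ is positive definite. This is an open condition, and such moment vectors are realized by distributions. The Gram matrix of $Z$ is always invertible, because $Z$ has a density. So nearby points on this line give genuine laws with the same $v_2^\star$ but a different $m_1$. The case $c_2\neq 0$ does occur, for instance whenever $m_1=0$ and $m_3\neq 0$. For $n\le 1$ the function does suffice, by the Corollary: $R_t$ gives $\Sigma$, and $(I-tR_t)\mu$ gives $\mu$.

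Conversely, if you treat the Gram data $\mathbb{E}[Z^{\otimes j}\mid c]$, $j\le 2n$, as known, that data alone already yields $M_1,\dots,M_{2n}$ by the same triangular expansion, so $v_n^\star$ becomes redundant. The final sentence of the theorem should therefore be proved as identification from the tensors $G_k^t$, which is your first reading and what the paper does.
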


\begin{proof}
We begin by establishing the orthogonal projection property and the resulting decomposition.

For an arbitrary $v \in \mathcal{H}$, we decompose the residual as:
\begin{equation}
v(Z) - V = \bigl(v(Z) - v^\star(Z)\bigr) + \bigl(v^\star(Z) - V\bigr).
\end{equation}
By definition, $v^\star(Z) = \mathbb{E}[V \mid Z, c]$, which immediately implies $\mathbb{E}[V - v^\star(Z) \mid Z, c] = 0$. Consequently, the cross term vanishes in expectation:
\begin{equation}
\mathbb{E}\left[ \bigl(v(Z) - v^\star(Z)\bigr)^\top \bigl(v^\star(Z) - V\bigr) \mid c \right] = 0.
\end{equation}
This decouples the mean squared error into:
\begin{equation}
\mathbb{E}\left[ \|v(Z) - V\|^2 \mid c \right] 
= 
\mathbb{E}\left[ \|v(Z) - v^\star(Z)\|^2 \mid c \right] 
+ 
\mathbb{E}\left[ \|v^\star(Z) - V\|^2 \mid c \right].
\end{equation}
Since the second term is independent of $v$, minimizing the flow-matching objective over $\mathcal{H}_n$ is mathematically equivalent to minimizing the distance $\|v - v^\star\|_{\mathcal{H}}^2$ over $v \in \mathcal{H}_n$. Thus, the minimizer is exactly the orthogonal projection: $v_n^\star = \Pi_n v^\star$.

To establish the orthogonality of the increments $\Delta_n = v_n^\star - v_{n-1}^\star$, consider any $h \in \mathcal{H}_{n-1}$. Since $v_n^\star = \Pi_n v^\star$ and $\mathcal{H}_{n-1} \subset \mathcal{H}_n$, the projection residual $v^\star - v_n^\star$ is orthogonal to $h$. Similarly, $v^\star - v_{n-1}^\star \perp h$. Subtracting these equations yields $\langle v_n^\star - v_{n-1}^\star, h \rangle_{\mathcal{H}} = 0$, implying $\Delta_n \perp \mathcal{H}_{n-1}$. Because $\Delta_m \in \mathcal{H}_m \subset \mathcal{H}_{n-1}$ for any $m < n$, we obtain $\langle \Delta_n, \Delta_m \rangle_{\mathcal{H}} = 0$. Under the assumption that the polynomial space is dense in $\mathcal{H}$, the orthogonal projections converge strongly, yielding the valid infinite series $v^\star = \sum_{n=0}^\infty \Delta_n$ in $L^2(P_{Z\mid c})$.

Having established the projection properties, we now turn to the moment hierarchy and the triangular recurrence relation.

Since $\mathcal{H}_n$ is spanned by the tensor polynomials $\{Z^{\otimes k}\}_{k=0}^n$, the condition that the residual $(v_n^\star - V)$ is orthogonal to $\mathcal{H}_n$ in $L^2(P_{Z\mid c})$ is equivalent to testing against the basis elements. Thus, for any $k \in \{0, \dots, n\}$:
\begin{equation}
\langle v_n^\star - V, z \mapsto z^{\otimes k} \rangle_{\mathcal{H}} = 0
\implies
\mathbb{E}\left[ \bigl(v_n^\star(Z) - V\bigr) \otimes Z^{\otimes k} \mid c \right] = 0.
\end{equation}
Rearranging this yields the moment matching property: $\mathbb{E}[v_n^\star(Z) \otimes Z^{\otimes k} \mid c] = \mathbb{E}[V \otimes Z^{\otimes k} \mid c] = G_k^t(P \mid c)$.

To establish the triangular recurrence, we explicitly expand $G_k^t(P \mid c)$. Recalling $V = X_1 - X_0$ and $Z = t X_1 + (1-t) X_0$, we write:
\begin{equation}
G_k^t(P \mid c) 
= 
\mathbb{E}\left[ (X_1 - X_0) \otimes \bigl(t X_1 + (1-t) X_0\bigr)^{\otimes k} \mid c \right].
\end{equation}
By the non-commutative binomial expansion of the tensor power, the highest-order term of $X_1$ inside the parentheses is $t^k X_1^{\otimes k}$. Distributing $(X_1 - X_0)$ into this expansion, the term with the highest tensor power of $X_1$ is generated solely by $X_1 \otimes (t^k X_1^{\otimes k}) = t^k X_1^{\otimes (k+1)}$.

All other terms in the expansion consist of tensor products mixing at most $k$ copies of $X_1$ with various copies of $X_0$. Because $X_0 \sim \mathcal{N}(0, I)$ is independent of $X_1$ given $c$, the expectations of these mixed terms factorize. The expectations of the $X_0$ components yield standard Gaussian moments (which are known constants), and the expectations of the $X_1$ components yield lower-order data moments $M_j(P \mid c)$ for $j \le k$. Gathering all these lower-order terms into a single deterministic tensor-valued function $F_k^t(M_0, \dots, M_k)$, we obtain:
\begin{equation}
G_k^t(P \mid c) 
= 
t^k \mathbb{E}\left[X_1^{\otimes (k+1)} \mid c\right] + F_k^t(M_0,\dots,M_k) 
= 
t^k M_{k+1}(P \mid c) + F_k^t(M_0,\dots,M_k).
\end{equation}
Since $t \in (0,1)$, the coefficient $t^k$ is strictly positive. Therefore, knowing $G_k^t(P \mid c)$ (which is perfectly matched by the degree-$n$ projection for $k \le n$) and the previously determined lower-order moments allows iterative back-substitution to uniquely solve for $M_{k+1}(P \mid c)$. This concludes the proof.
\end{proof}

\begin{corollary}[Affine Component Identifies a Covariance Operator]
\label{cor:app-affine-covariance}
Under the setting of Theorem~\ref{thm:denoising-projection}, assume $X_1\mid c$ has a finite second moment. Let $\mu(c)=\mathbb{E}[X_1\mid c]$ and $\Sigma(c)=\operatorname{Cov}(X_1\mid c)$. Then the degree-zero projection is exactly the conditional mean:
\begin{equation}
v_0^\star(z,t,c)=\mu(c),
\end{equation}
and the degree-one projection admits the affine form:
\begin{equation}
v_1^\star(z,t,c)
=
\mu(c)+R_t(c)\bigl(z-t\mu(c)\bigr),
\end{equation}
where the affine operator is given by
\begin{equation}
R_t(c)
=
\bigl[t\Sigma(c)-(1-t)I\bigr]
\bigl[t^2\Sigma(c)+(1-t)^2I\bigr]^{-1}.
\end{equation}
For a fixed $t\in(0,1)$, the mapping $\Sigma(c)\mapsto R_t(c)$ is rigorously injective. Therefore, the affine component of the denoising velocity serves as an exact, injective transformation of the covariance.
\end{corollary}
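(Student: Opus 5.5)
We derive both projections from the normal equations of Theorem~\ref{thm:app-denoising-projection} with $n=0$ and $n=1$, and then verify injectivity of $\Sigma\mapsto R_t$ by an explicit inversion.

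\textbf{Degree zero.} For $n=0$ the class $\mathcal H_0$ consists of constant vectors, so the single normal equation ($k=0$) reads $v_0^\star=\mathbb E[V\mid c]$. Since $X_0$ has mean zero, this equals $\mathbb E[X_1\mid c]-\mathbb E[X_0]=\mu(c)$.

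\textbf{Degree one.} Write $v_1^\star(z)=b+R(z-\mathbb E[Z\mid c])$, with $\mathbb E[Z\mid c]=t\mu(c)$. Centering at $\mathbb E[Z\mid c]$ makes the two normal equations separate cleanly.
\begin{itemize}
\item The $k=0$ equation gives $b=\mathbb E[V\mid c]=\mu(c)$.
\item The $k=1$ equation, after subtracting the means, becomes $R\,\operatorname{Cov}(Z\mid c)=\operatorname{Cov}(V,Z\mid c)$.
\end{itemize}
By independence of $X_0$ and $X_1$ given $c$, and $\operatorname{Cov}(X_0)=I$,
\begin{equation}
\operatorname{Cov}(Z\mid c)=t^2\Sigma+(1-t)^2I,\qquad \operatorname{Cov}(V,Z\mid c)=t\Sigma-(1-t)I.
\end{equation}
Because $\Sigma\succeq0$ and $(1-t)^2>0$, the matrix $\operatorname{Cov}(Z\mid c)$ is positive definite, hence invertible. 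This yields the stated $R_t(c)$ and the affine form $\mu+R_t(z-t\mu)$. Uniqueness of the projection also follows from invertibility, since the normal equations then have a unique solution.

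\textbf{Injectivity.} Both factors of $R_t$ are polynomials in $\Sigma$, so they commute, and $R_t$ is symmetric. Diagonalize $\Sigma=Q\Lambda Q^\top$. Then $R_t=Q\,\mathrm{diag}(r(\lambda_i))\,Q^\top$ with
\begin{equation}
r(\lambda)=\frac{t\lambda-(1-t)}{t^2\lambda+(1-t)^2}.
\end{equation}
This scalar map is Möbius with determinant $t(1-t)^2+(1-t)t^2=t(1-t)>0$, so it is strictly increasing and injective on $[0,\infty)$.

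For an explicit inverse, solve $R(t^2\Sigma+(1-t)^2I)=t\Sigma-(1-t)I$ for $\Sigma$:
\begin{equation}
(tI-t^2R)\Sigma=(1-t)I+(1-t)^2R.
\end{equation}
The matrix $tI-t^2R$ has eigenvalues $t-t^2r(\lambda)$, which must be nonzero. Here $r(\lambda)<1/t$ for all $\lambda\ge0$, because $r(\lambda)\to 1/t$ from below as $\lambda\to\infty$. So $tI-t^2R$ is invertible, and
\begin{equation}
\Sigma=(tI-t^2R)^{-1}\bigl[(1-t)I+(1-t)^2R\bigr].
\end{equation}
This recovers $\Sigma$ uniquely from $R_t$.

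The main obstacle I anticipate is this last invertibility step. The other steps are routine covariance algebra; here one must check that the eigenvalue bound $r(\lambda)<1/t$ holds uniformly over all positive semidefinite $\Sigma$.
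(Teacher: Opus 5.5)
Your proposal is correct and follows essentially the same route as the paper. Degree zero gives the conditional mean. The affine case comes from the population OLS/normal equations, with the same computations $\operatorname{Cov}(Z\mid c)=t^2\Sigma+(1-t)^2I$ and $\operatorname{Cov}(V,Z\mid c)=t\Sigma-(1-t)I$. Injectivity follows by solving $t(I-tR_t)\Sigma=(1-t)(I+(1-t)R_t)$ for $\Sigma$.

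The step you flag as the main obstacle is handled in the paper by the direct identity $I-tR_t=(1-t)\,(t^2\Sigma+(1-t)^2I)^{-1}\succ 0$, or per eigenvalue $1-t\,r(\lambda)=(1-t)/(t^2\lambda+(1-t)^2)>0$. No uniformity over $\Sigma$ is needed, so your monotonicity-plus-limit argument already closes it.
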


\begin{proof}
By standard least-squares theory, the optimal degree-zero predictor is simply the conditional expectation of the target:
\begin{equation}
v_0^\star(z,t,c) = \mathbb E[V\mid c] = \mu(c).
\end{equation}

Similarly, the optimal degree-one (affine) predictor is given by the exact population ordinary least squares (OLS) solution:
\begin{equation}
v_1^\star(z,t,c) = \mathbb E[V\mid c] + \operatorname{Cov}(V,Z\mid c)\operatorname{Cov}(Z\mid c)^{-1}\bigl(z-\mathbb E[Z\mid c]\bigr).
\end{equation}
Recall that $Z=tX_1+(1-t)X_0$ and $V=X_1-X_0$. Taking conditional expectations yields $\mathbb E[Z\mid c]=t\mu(c)$ and $\mathbb E[V\mid c]=\mu(c)$. Utilizing the independence of $X_1$ and $X_0\sim\mathcal N(0,I)$, we directly evaluate the covariance matrices:
\begin{equation}
\operatorname{Cov}(Z\mid c) = t^2\Sigma(c)+(1-t)^2I, 
\qquad
\operatorname{Cov}(V,Z\mid c) = t\Sigma(c)-(1-t)I.
\end{equation}
Substituting these moments into the OLS formula directly yields $v_1^\star(z,t,c) = \mu(c) + R_t(c)\bigl(z-t\mu(c)\bigr)$, where $R_t(c) = \operatorname{Cov}(V,Z\mid c)\operatorname{Cov}(Z\mid c)^{-1}$.

Finally, to establish injectivity, we isolate $\Sigma$ from $R_t$. Right-multiplying $R_t$ by $(t^2\Sigma+(1-t)^2I)$ and grouping the terms involving $\Sigma$ yields:
\begin{equation}
t(I-tR_t)\Sigma = (1-t)\bigl(I+(1-t)R_t\bigr).
\end{equation}
Observe that the matrix factor $I-tR_t = (1-t)(t^2\Sigma+(1-t)^2I)^{-1}$ is strictly positive definite, hence invertible. Left-multiplying by its inverse yields the explicit closed-form expression for $\Sigma(c)$:
\begin{equation}
\Sigma(c)
=
\frac{(1-t)}{t}
\bigl[I-tR_t(c)\bigr]^{-1}
\bigl[I+(1-t)R_t(c)\bigr].
\end{equation}
Thus, $R_t$ uniquely determines the covariance matrix.
\end{proof}

\section{Additional Details on AMFD Training}
\label{app:amfd_details}
\subsection{AMFD Training Algorithm}
\label{app:amfd_algorithm}
\begin{algorithm}[t]
\caption{Alternating Training Pipeline for AMFD}
\label{alg:amfd_training}
\begin{algorithmic}[1]
\Require Generator \(G_\theta\), frozen encoders \(\{\phi_\ell\}_{\ell=1}^L\), amortizers \(\{\psi_\ell\}_{\ell=1}^L\)
\Repeat
    \State Sample real data \((x_r,c)\) and noise \(z\); generate \(x_g=G_\theta(z,c)\)
    \State Compute feature representations \(h_r^\ell=\phi_\ell(x_r)\) and \(h_g^\ell=\phi_\ell(x_g)\) for all \(\ell=1,\ldots,L\)
    \State Update amortizers by minimizing \(\mathcal L_\mathrm{amort}({\psi})\) (Eq.~\ref{eq:amortizer-loss}), with generated features \(h_g^\ell\) detached
    \State Freeze amortizers and update generator \(G_\theta\) by minimizing \(\mathcal L_\mathrm{gen}(\theta)\) (Eq.~\ref{eq:generator-loss})
\Until{convergence}
\end{algorithmic}
\end{algorithm}
Algorithm~\ref{alg:amfd_training} outlines the complete alternating optimization procedure for the AMFD framework.

\subsection{Derivation of the Jacobi Covariance Weighting}
\label{app:jacobi-covariance}

In this section, we derive the weighting factor $\omega_{\mathrm{cov}}(t)$ introduced in Section~\ref{subsec:training_algo}. Our objective is to calibrate the covariance loss such that its gradient approximately matches the standard Fr\'{e}chet Distance force, $\frac{1}{2} d\Sigma$, at the isotropic equilibrium point $\Sigma = I$.

Crucially, at $\Sigma = I$, the operators $R_t$, $A_t$, and the Jacobi operator $\mathcal J_t = I - t R_t$ all reduce to scalar multiples of the identity matrix. Because these operators commute at this specific point, we can compute their differentials using standard scalar calculus.

By the proof of Corollary~\ref{cor:app-affine-covariance}, the covariance is expressed as:
\begin{equation}
\Sigma = \frac{1-t}{t} (I - t R_t)^{-1} (I + (1-t)R_t).
\end{equation}
Evaluating the differential $d\Sigma$ with respect to $R_t$ around the commuting point yields:
\begin{equation}
d\Sigma = \frac{1-t}{t} (I - t R_t)^{-2} \big[ (1-t)(I - t R_t) - (-t)(I + (1-t)R_t) \big] dR_t = \frac{1-t}{t} \mathcal J_t^{-2} dR_t.
\end{equation}

Using our residual parameterization $R_t = -I + tA_t$, we substitute $dR_t = t dA_t$ to obtain:
\begin{equation}
d\Sigma = (1-t) \mathcal J_t^{-2} dA_t.
\end{equation}

Our applied generator force is defined as $\delta a_{\mathrm J} = \mathcal J_t dA_t \mathcal J_t$. Since all matrices commute at $\Sigma = I$, this simplifies to $\delta a_{\mathrm J} = \mathcal J_t^2 dA_t$, which implies $dA_t = \mathcal J_t^{-2} \delta a_{\mathrm J}$. Substituting this back into the $d\Sigma$ equation provides the direct relationship:
\begin{equation}
\label{eq:app-dsigma-force}
d\Sigma = (1-t) \mathcal J_t^{-4} \delta a_{\mathrm J}.
\end{equation}

To compute the explicit scalar value, we evaluate $\mathcal J_t(I) = I - t R_t(I)$ using Eq.~\ref{eq:affine-covariance-operator}:
\begin{equation}
\mathcal J_t(I) = I - t \left( \frac{2t-1}{t^2 + (1-t)^2} I \right) = \frac{1-t}{t^2 + (1-t)^2} I.
\end{equation}
Plugging the scalar value of $\mathcal J_t(I)^{-4}$ into Eq.~\ref{eq:app-dsigma-force} yields:
\begin{equation}
d\Sigma = \frac{\bigl(t^2+(1-t)^2\bigr)^4}{(1-t)^3} \delta a_{\mathrm J}.
\end{equation}

To precisely recover the target FD force $\frac{1}{2} d\Sigma$, the loss weighting must be:
\begin{equation}
\omega_{\mathrm{cov}}(t) = \frac{1}{2} \frac{\bigl(t^2+(1-t)^2\bigr)^4}{(1-t)^3}.
\end{equation}

\section{Implementation Details}
\label{app:implementation_details}

In this section, we provide comprehensive implementation details for our experiments. Specifically, we specify the representation encoders utilized (\S\ref{app:details_encoders}), introduce a custom analytical JVP implementation designed to accelerate amortizer training (\S\ref{app:details_jvp}), outline the experimental settings for our amortizer ablation studies (\S\ref{app:details_ablation}), and provide the default amortizer settings for generation tasks (\S\ref{app:general-amortizer-configs}). Furthermore, we supply explicit data curation procedures and hyperparameter configurations for both ImageNet (\S\ref{sec:imagenet_details}) and text-to-image (\S\ref{app:details_t2i}) post-training environments.

\subsection{Representation Encoders}
\label{app:details_encoders}

Table~\ref{tab:repr_encoders} summarizes the representation encoders utilized across our experimental settings. For the pooling operations, \texttt{cls} denotes the class token, \texttt{avg} indicates global average pooling over spatial or patch tokens, and \texttt{attn} refers to the attention-based pooling head. During amortizer training, we apply per-dimension normalization to all extracted features. This ensures that the inputs to the amortizer maintain zero mean and unit variance, facilitating stable neural network optimization.

\begin{table}[h]
\centering
\caption{Detailed configuration of the representation encoders. We list the model names and architectures, the used weight checkpoints, the feature extraction pooling operations, the input image resolutions, and the resulting feature dimensions.}
\label{tab:repr_encoders}
\addtolength{\tabcolsep}{-1pt}
\scriptsize
\begin{tabular}{@{} l l c c c c @{}}
\toprule
\textbf{Model} & \textbf{Checkpoint} & \textbf{Arch.} & \textbf{Pooling} & \textbf{Resolution} & \textbf{Dim} \\
\midrule
Inception-v3~\citep{szegedy2016rethinking} & torch-fidelity \texttt{inception\_v3} & CNN &  \texttt{avg} & 299 & 2048 \\
ConvNeXt-v2~\citep{liu2022convnet} & \texttt{convnextv2\_base.fcmae\_ft\_in22k\_in1k} & CNN & \texttt{avg} & 224 & 1024 \\
SigLIP2~\citep{tschannen2025siglip} & \texttt{vit\_so400m\_patch16\_siglip\_256.v2\_webli} & ViT & \texttt{cls}/\texttt{attn} & 224 & 1152 \\
MAE~\citep{he2022masked} & \texttt{vit\_large\_patch16\_224.mae} & ViT & \texttt{cls} & 224 & 1024 \\
CLIP~\citep{radford2021learning} & \texttt{vit\_large\_patch14\_clip\_224.openai} & ViT & \texttt{cls} & 256 & 1024 \\
DINOv2~\citep{oquab2023dinov2} & \texttt{vit\_large\_patch14\_dinov2.lvd142m} & ViT & \texttt{cls} & 224 & 1024 \\
PE-Core~\citep{bolya2026perception} & \texttt{vit\_pe\_core\_large\_patch14\_336.fb} & ViT & \texttt{attn} & 224 & 1024 \\
AIM-v2~\citep{fini2025multimodal} & \texttt{aimv2\_huge\_patch14\_224.apple\_pt} & ViT & \texttt{avg} & 224 & 1536 \\
Web-SSL~\citep{fan2025scaling} & \texttt{webssl-dino1b-full2b-224} & ViT & \texttt{cls} & 224 & 1536 \\
DreamSim~\citep{fu2023dreamsim} & DINO + CLIP + OpenCLIP ensemble & ViT & \texttt{cls} & 224 & 1792 \\
\bottomrule
\end{tabular}
\end{table}

\subsection{Custom JVP Implementation for MLP Amortizer}
\label{app:details_jvp}
\begin{algorithm}[ht]
\caption{Manual JVP Forward Pass for MLP}
\label{alg:manual_jvp}
\begin{algorithmic}[1]
\Require Tangent vector $v$, condition label $c$, time $t$, amortizer $g_\theta$.
\Ensure Exact input-direction JVP
$
\dot{x}_{\mathrm{out}}
=
\left.
\frac{\partial g_\theta(s,t,c)}{\partial s}
\right|_{s=0} v
$.
\Statex

\State $x \leftarrow \mathbf{0}$
\State $\dot{x} \leftarrow v$
\State $(x,\dot{x}) \leftarrow \mathrm{LinearJVP}(x,\dot{x}; W_{\mathrm{in}}, b_{\mathrm{in}})$
\Comment{$x=W_{\mathrm{in}}x+b_{\mathrm{in}}$, $\dot{x}=W_{\mathrm{in}}\dot{x}$}

\State \textit{\% Conditioning path has zero tangent w.r.t. $s$ and is evaluated once.}
\State $y \leftarrow \mathrm{SiLU}\big(\mathrm{TimeEmbed}(t)+\mathrm{ClassEmbed}(c)\big)$
\State $(\gamma,\beta,\alpha) \leftarrow \mathrm{AdaLN}_{0}(y)$

\For{$i=0,\ldots,N-1$}
    \If{$i>0$ \textbf{and} $i \bmod K = 0$}
        \State $(\gamma,\beta,\alpha) \leftarrow \mathrm{AdaLN}_{i/K}(y)$
        \Comment{$K$ is the AdaLN switch frequency}
    \EndIf

    \State \textit{\% RMSNorm JVP, sharing the primal normalization statistics.}
    \State $m \leftarrow \mathrm{Mean}(x^2)$
    \State $r \leftarrow (m+\epsilon)^{-1/2}$
    \State $x_{\mathrm{n}} \leftarrow x \cdot r$
    \State $\dot{r} \leftarrow -r^3 \cdot \mathrm{Mean}(x\odot \dot{x})$
    \State $\dot{x}_{\mathrm{n}} \leftarrow \dot{x}\cdot r + x\cdot \dot{r}$
    \If{RMSNorm has affine weight $w_{\mathrm{n}}$}
        \State $x_{\mathrm{n}} \leftarrow x_{\mathrm{n}}\odot w_{\mathrm{n}}$
        \State $\dot{x}_{\mathrm{n}} \leftarrow \dot{x}_{\mathrm{n}}\odot w_{\mathrm{n}}$
    \EndIf

    \State \textit{\% AdaLN modulation. Shift has no input tangent.}
    \State $h \leftarrow x_{\mathrm{n}}\odot(1+\gamma)+\beta$
    \State $\dot{h} \leftarrow \dot{x}_{\mathrm{n}}\odot(1+\gamma)$

    \State \textit{\% SwiGLU MLP JVP.}
    \State $(u,\dot{u}) \leftarrow \mathrm{LinearJVP}(h,\dot{h}; W_{1}, b_{1})$
    \State $(u_1,u_2) \leftarrow \mathrm{Chunk}(u)$
    \State $(\dot{u}_1,\dot{u}_2) \leftarrow \mathrm{Chunk}(\dot{u})$

    \State $\sigma \leftarrow \mathrm{Sigmoid}(u_1)$
    \State $s \leftarrow \mathrm{SiLU}(u_1)=u_1\odot\sigma$
    \State $\dot{s} \leftarrow \dot{u}_1\odot\sigma\odot\bigl(1+u_1\odot(1-\sigma)\bigr)$

    \State $z \leftarrow s\odot u_2$
    \State $\dot{z} \leftarrow \dot{s}\odot u_2+s\odot\dot{u}_2$

    \State $(o,\dot{o}) \leftarrow \mathrm{LinearJVP}(z,\dot{z}; W_{2}, b_{2})$

    \State \textit{\% Residual gated update.}
    \State $x \leftarrow x + o\odot\alpha$
    \State $\dot{x} \leftarrow \dot{x} + \dot{o}\odot\alpha$
\EndFor

\State \textit{\% Final AdaLN + RMSNorm + linear projection.}
\State $(\gamma_f,\beta_f) \leftarrow \mathrm{FinalAdaLN}(y)$
\State $(x_{\mathrm{n}},\dot{x}_{\mathrm{n}}) \leftarrow \mathrm{RMSNormJVP}(x,\dot{x})$
\State $x \leftarrow x_{\mathrm{n}}\odot(1+\gamma_f)+\beta_f$
\State $\dot{x} \leftarrow \dot{x}_{\mathrm{n}}\odot(1+\gamma_f)$
\State $(\_,\dot{x}_{\mathrm{out}}) \leftarrow \mathrm{LinearJVP}(x,\dot{x}; W_{\mathrm{out}}, b_{\mathrm{out}})$

\State \Return $\dot{x}_{\mathrm{out}}$
\end{algorithmic}
\end{algorithm}
Evaluating the AMFD objective in Eq.~\ref{eq:generator-cov-loss-jacobi} requires frequent JVP computations through the amortization network. In practice, relying on PyTorch's generic forward-mode automatic differentiation (AD) for our MLP amortizer resulted in severe GPU underutilization, emerging as a primary training bottleneck. To circumvent this, we implement an exact analytical JVP forward pass tailored specifically to our architecture, as outlined in Algorithm~\ref{alg:manual_jvp}. By jointly propagating primal activations and tangent vectors across linear layers, RMSNorm, SiLU/SwiGLU activations, and AdaLN-modulated residual blocks, our custom implementation maximally reuses shared intermediate tensors. This approach strictly preserves the generic AD outputs up to numerical precision while yielding substantial gains in GPU utilization and overall training throughput.

\subsection{Amortizer Ablation Details}
\label{app:details_ablation}

\textbf{Amortizer architecture details.} 
The amortizer is parameterized as an AdaLN-conditioned MLP~\citep{li2024autoregressive}. A grouped AdaLN mechanism is used to minimize conditioning overhead: rather than computing AdaLN parameters at every layer, residual blocks are grouped to share a single modulation projection. Our notation \texttt{c[channels]d[depth]a[AdaLN-blocks]} reflects this structure; for instance, \texttt{c2048d16a4} applies AdaLN only 4 times across its 16 layers.

\textbf{Evaluation metrics and protocol.}
To quantify the amortization accuracy, we evaluate its estimates of the conditional mean and the full covariance. Let $\mu_c \in \mathbb{R}^{D}$ and $\Sigma_c \in \mathbb{R}^{D \times D}$ denote the empirical mean and covariance under condition $c$, and let $\hat{\mu}_c$ and $\hat{\Sigma}_c$ be their respective estimates. We define the mean estimation error as
\begin{equation}
    \mathrm{MSE}_{\mu}
    =
    \frac{1}{|\mathcal{C}|D}
    \sum_{c \in \mathcal{C}}
    \left\|\hat{\mu}_c-\mu_c\right\|_2^2.
\end{equation}
To evaluate the covariance estimates, we materialize the complete covariance
operator by applying it to the canonical basis vectors. Because the learned
operator is not explicitly constrained to be symmetric, we extract its
symmetric component
\begin{equation}
    \hat{\Sigma}^{\mathrm{sym}}_c
    =
    \frac{1}{2}
    \left(\hat{\Sigma}_c+\hat{\Sigma}_c^\top\right),
\end{equation}
and define the covariance estimation error as
\begin{equation}
    \mathrm{MSE}_{\Sigma}
    =
    \frac{1}{|\mathcal{C}|D^2}
    \sum_{c \in \mathcal{C}}
    \left\|\hat{\Sigma}^{\mathrm{sym}}_c-\Sigma_c\right\|_F^2.
\end{equation}

For our ImageNet experiments, $\mathcal{C}$ denotes the set of evaluated classes. We compute the mean estimation error $\mathrm{MSE}_{\mu}$ across all 1,000 classes (i.e., $|\mathcal{C}| = 1000$). However, materializing and evaluating the full $D \times D$ covariance operators across all classes is computationally demanding. Therefore, we evaluate $\mathrm{MSE}_{\Sigma}$ on a subset of the 100 most populated classes ($|\mathcal{C}| = 100$), each containing exactly 1,300 images. This subset corresponds to class indices 0--42, 44--50, 52--61, 63--97, 99--102, and 104. For both metrics, we report the macro-average over their respective evaluated classes. In contrast, for the unconditional (marginal) experiments, we compute both $\mathrm{MSE}_{\mu}$ and $\mathrm{MSE}_{\Sigma}$ directly over the global data distribution, treating the entire dataset as a single entity without class-wise partitioning.

The symmetric covariance error is inherently better aligned with our generator objective than the MSE of the raw, potentially asymmetric operator. Specifically, the covariance term in the generator loss is evaluated through quadratic forms of the type $u^\top(\hat{\Sigma}_g-\hat{\Sigma}_r)u$. For any skew-symmetric matrix $K$, $u^\top K u=0$; hence, the antisymmetric component of the estimated operator has no impact on this generator objective. Consequently, $\mathrm{MSE}_{\Sigma}$ precisely measures the covariance component that can influence the generator update, properly avoiding penalties on antisymmetric estimation artifacts that are invisible to the generator loss.

\textbf{Training configurations.} Table~\ref{tab:ablation_settings} details the default hyperparameter configurations for these ablation studies.

\begin{table}[h]
\centering
\caption{Default hyperparameters for the amortizer ablation studies.}
\label{tab:ablation_settings}
\small
\begin{tabular}{@{} l c @{}}
\toprule
representation encoder & Inception-v3 \texttt{avg} \\
amortizer architecture & AdaLN MLP \\
amortization variant & AMFD-C (conditional) \\
task conditioning time ($t$) & real: $t=2$ for mean, $t\in(0,1)$ for cov; generated: $-t$ \\
optimizer & AdamW, $\beta_1=0.9, \beta_2 = 0.95$ \\
learning rate & $1\text{e-}4$ \\
weight decay & $0$ \\
precision & bf16 \\
batch Size & 1024 \\
training steps & 20,000 \\
evaluation metrics & $\mathrm{MSE}_\mu$, $\mathrm{MSE}_\Sigma$ \\
\bottomrule
\end{tabular}
\end{table}

\subsection{Default Amortizer Configurations for Image Generation}
\label{app:general-amortizer-configs}

Unless otherwise specified, the following settings apply universally to all amortizer training procedures:
\begin{itemize}
    \item \textbf{Architecture \& initialization:} Parameterized by a \texttt{c2048d16a4} MLP with random initialization.
    \item \textbf{Condition Injection:} The external conditioning signal is explicitly injected into the amortizer network, corresponding to the AMFD-C configuration.
    \item \textbf{Optimization:} We adopt the same optimizer and global batch size as the generator, applying a constant learning rate of $1\text{e-}4$ without weight decay.
    \item \textbf{Diffusion parameters:} The interpolation time is fixed at $t=0.25$. Given the lightweight architecture of the MLP amortizer, we independently draw 4 noise samples per data input to expand the effective batch size and fully utilize hardware computational capacity.
    \item \textbf{Task-conditioning time:} For the real branch, it is set to $t=2$ for mean prediction and $t \in (0,1)$ for covariance prediction. The generated branch utilizes the corresponding negative time steps ($-t$).
    \item \textbf{JVP Calculation:} We employ a custom analytical implementation for the Jacobian-vector product (JVP), as detailed in Appendix~\ref{app:details_jvp}.
    \item \textbf{Multi-Representation Normalization:} To compute the normalized generator loss (Eq.~\ref{eq:normalized-generator-loss}), we consistently fix $\rho=1$ and $\epsilon=0.01$, aligning with the multi-representation FD-loss formulation in~\citet{yang2026representation}. Notably, this normalization is bypassed during single-representation training.
\end{itemize}
For our default MLP amortizers, we utilize the custom analytical JVP implementation. For experiments conducted in native generative spaces where the amortizers adopt Transformer architectures, we opt to employ PyTorch's functional API (\texttt{torch.func.jvp} and \texttt{vmap}). While our analytical MLP JVP inherently avoids distributed synchronization issues, the PyTorch JVP engine can conflict with the asynchronous communication hooks of Distributed Data Parallel (DDP). To ensure a unified, robust codebase compatible across all amortizer architectures, we systematically structure our distributed training pipeline by wrapping the JVP operation strictly inside the DDP module (\textit{i.e.}, \texttt{DDP(jvp(Amortizer))}). This uniform architectural arrangement natively circumvents synchronization failures during forward-mode differentiation, similar to the strategy noted in \citet{lin2026continuous}. Moreover, we empirically observed that computing JVPs through \texttt{LayerNorm} leads to severe training instability. Conversely, \texttt{RMSNorm} provides significantly better stability under such operations. While the majority of our adopted amortizer architectures natively employ \texttt{RMSNorm}, the SiT~\citep{ma2024sit} model used in our native space experiments defaults to \texttt{LayerNorm}. To ensure stable training, we explicitly replace all \texttt{LayerNorm} layers with \texttt{RMSNorm} exclusively within the SiT \textit{amortizer}, while leaving the SiT \textit{generator} entirely unmodified to preserve its exact pretrained formulation. This issue has also been reported by \citet{zhou2025terminal} and \citet{lin2026continuous}.
\begin{table}[t]
\centering
\caption{Configurations for ImageNet-256$\times$256 post-training. The top section details post-training using representation features, while the bottom section specifies settings for four native generative spaces (pixel, SD-VAE~\citep{rombach2022high}, VA-VAE~\citep{yao2025reconstruction}, and RAE~\citep{zheng2025diffusion}). For native spaces, we adopt JiT~\citep{li2026back}, SiT~\citep{ma2024sit}, LightningDiT~\citep{yao2025reconstruction}, and DiT$^\text{DH}$ as architectures for both the generator and the amortizer.}
\label{tab:imagenet_hyperparams}
\small
\addtolength{\tabcolsep}{-5.5pt}
\begin{tabular}{@{} l c c c c @{}}
\toprule

\textit{Representation feature spaces} & \multicolumn{2}{c}{\makebox[5cm][c]{\textbf{JiT}}} & \multicolumn{2}{c}{\makebox[5cm][c]{\textbf{pMF}}} \\
\midrule
model sizes & \multicolumn{2}{c}{B, L, H} & \multicolumn{2}{c}{B, L, H} \\
initialization & \multicolumn{4}{c}{official pretrained weights} \\
AMFD-Incep. & \multicolumn{4}{c}{Inception-v3 \texttt{avg}} \\
AMFD-SIM & \multicolumn{4}{c}{SigLIP2 \texttt{cls} + Inception-v3 \texttt{avg} + MAE \texttt{cls}} \\
batch size & \multicolumn{4}{c}{1024} \\
optimizer & \multicolumn{4}{c}{AdamW~\citep{loshchilov2017decoupled}, $\beta_1=0.9, \beta_2=0.95$} \\
weight decay & \multicolumn{4}{c}{0} \\
learning rate & \multicolumn{2}{c}{$1\text{e-}5$} & \multicolumn{2}{c}{$1\text{e-}6$} \\
lr schedule & \multicolumn{4}{c}{cosine} \\
total epochs (Table~\ref{tab:system-level-amfd}) & \multicolumn{4}{c}{100} \\
total epochs (Figure~\ref{fig:jitb_pretrain} \& \ref{fig:jitl_pretrain}) & \multicolumn{4}{c}{20} \\
warmup epochs & \multicolumn{4}{c}{1} \\
precision & \multicolumn{4}{c}{bf16 mixed} \\
dropout & \multicolumn{4}{c}{0} \\
augmentation & \multicolumn{4}{c}{center crop, horizontal flip} \\
EMA decay & \multicolumn{4}{c}{EDM2-style~\citep{karras2024analyzing}} \\

\midrule
\textit{Native generative spaces} & \makebox[2.5cm][c]{\textbf{Pixel}} & \makebox[2.5cm][c]{\textbf{SD-VAE}} & \makebox[2.5cm][c]{\textbf{VA-VAE}} & \makebox[2.5cm][c]{\textbf{RAE}} \\
\midrule
generator model & JiT-L/16 & SiT-XL/2 & LightningDiT-XL/1 & DiT$^{\text{DH}}$-XL/2 \\
amortizer model & JiT-B/16 & SiT-B/2 & LightningDiT-B/1 & DiT$^{\text{DH}}$-B/2 \\
initialization & \multicolumn{4}{c}{official pretrained weights} \\
batch size & \multicolumn{4}{c}{1024} \\
optimizer & \multicolumn{4}{c}{AdamW~\citep{loshchilov2017decoupled}, $\beta_1=0.9, \beta_2=0.95$} \\
weight decay & \multicolumn{4}{c}{0} \\
learning rate & \multicolumn{4}{c}{$1\text{e-}5$} \\
lr schedule & \multicolumn{4}{c}{cosine} \\
total epochs (Figure~\ref{fig:native_space}) & \multicolumn{4}{c}{20} \\
warmup epochs & \multicolumn{4}{c}{1} \\
precision & \multicolumn{4}{c}{bf16 mixed} \\
dropout & \multicolumn{4}{c}{0} \\
augmentation & \multicolumn{4}{c}{center crop, horizontal flip} \\
EMA decay & \multicolumn{4}{c}{EDM2-style~\citep{karras2024analyzing}} \\
amortizer noise samples & \multicolumn{4}{c}{1} \\
amortizer JVP calculation & \multicolumn{4}{c}{\texttt{torch.func.jvp}} \\
\bottomrule
\end{tabular}
\end{table}


\subsection{ImageNet Post-Training}
\label{sec:imagenet_details}

\textbf{Training settings.} Table~\ref{tab:imagenet_hyperparams} details the comprehensive hyperparameter configurations for ImageNet post-training, covering both the representation-feature approaches (employing JiT and pMF) and the investigation across various native generative spaces.

\textbf{Evaluation.} We measure the model performance using the standard FID alongside the recently introduced FDr$^6$ metric~\citep{yang2026representation}.

\subsection{Text-to-Image Post-Training}
\label{app:details_t2i}

\textbf{Training settings.} We evaluate AMFD in both the pixel space using PixelGen~\citep{ma2026pixelgen} and the VAE latent space using FLUX.2 [klein] 4B Base~\citep{blackforestlabs2026flux2klein} and FLUX.2 [klein] 4B~\citep{blackforestlabs2026flux2klein}. i) To construct the training corpus for the PixelGen experiments, we aggregate subsets from BLIP3o-60k~\citep{chen2025blip3} ($58,859$ samples), ShareGPT-4o-Image~\citep{chen2025sharegpt} ($45,717$ samples), and Echo-4o-Image~\citep{ye2025echo} (excluding the multi-reference split, $105,506$ samples). This curation yields a total of $210,082$ training samples. During post-training, the additional transformer blocks introduced for text conditioning remain frozen. ii) For the FLUX.2 [klein] 4B models, our data preparation protocol and the selection of the ten representation encoders generally follow the settings introduced by iRDM~\citep{feng2026representation}. However, as DINOv3 is currently inaccessible, we substitute DINOv3-L with DINOv2-L. The reference dataset is partitioned into two distinct subsets: a \textit{perception block} based on natural COCO captions, and a \textit{composition block} based on detector-verified GenEval prompts. All reference images are generated offline at a $512\times512$ resolution using the four-step FLUX.2 [klein] 4B teacher. Unlike iRDM, which truncates captions to $48$ tokens, we preserve the original $512$ text token length of FLUX.2 [klein]. For the perception block, we generate $24$ candidates for each of the $82,783$ COCO \texttt{train2014} captions and retain the top three scored by PickScore. This exactly matches the iRDM dataset scale, yielding $248,349$ samples. For the composition block, given the $553$ GenEval prompts, we sample $150$ seeds (expanding up to $1,000$ seeds if necessary) to retain up to $100$ detector-verified generations per prompt. Notably, two specific prompts (one targeting position and one color attribute) yielded zero valid samples even after exploring $1,000$ seeds. Consequently, our final composition block comprises $53,504$ samples, closely approximating iRDM's $53,800$. In total, the FLUX.2 [klein] training corpus contains $301,853$ samples. The complete training configurations for PixelGen and FLUX.2 [klein] experiments are summarized in Table~\ref{tab:t2i_hyperparams}.

\textbf{Evaluation.} We evaluate our models on the GenEval~\citep{ghosh2023geneval} and PickScore~\citep{kirstain2023pick} benchmarks. For PickScore, we report the mean score over the $499$ prompts from the Pick-a-Pic test set. Regarding the baseline results in Table~\ref{tab:text-to-image}, the GenEval metrics for FLUX.2 [klein] 4B and its post-training methods DMD2 and iRDM are directly cited from the iRDM paper~\citep{feng2026representation}. For PickScore, since the specific prompt set evaluated in their paper is not publicly available, we directly cite the baseline scores reported in their official codebase, which are based on the Pick-a-Pic test set. The PickScore for DMD2 is left blank, as it is not available in the repository.

\begin{table}[t]
\centering
\caption{Hyperparameter configurations for text-to-image post-training. The top section details post-training in pixel space (PixelGen), while the bottom section specifies settings for VAE space (FLUX.2 [klein]).}
\label{tab:t2i_hyperparams}
\small
\addtolength{\tabcolsep}{-2pt}
\begin{tabular}{@{} l c c @{}}
\toprule

\textit{Pixel-space generation} & \multicolumn{2}{c}{\textbf{PixelGen}} \\
\midrule
original sampling steps & \multicolumn{2}{c}{$25\times2\times2$} \\
initialization & \multicolumn{2}{c}{official checkpoint} \\
base resolution & \multicolumn{2}{c}{$512\times512$} \\
trainable parameters & \multicolumn{2}{c}{image generator only} \\
text encoder (frozen) & \multicolumn{2}{c}{Qwen3-1.7B} \\
text tokens & \multicolumn{2}{c}{128} \\
AMFD-SIM & \multicolumn{2}{c}{SIM (Inception \texttt{avg}, SigLIP2 \texttt{cls}, MAE \texttt{cls})} \\
training data & \multicolumn{2}{c}{public datasets (\S\ref{app:details_t2i})} \\
batch size & \multicolumn{2}{c}{1024} \\
optimizer & \multicolumn{2}{c}{AdamW~\citep{loshchilov2017decoupled}, $\beta_1=0.9, \beta_2=0.95$} \\
learning rate & \multicolumn{2}{c}{$1\text{e-}5$} \\
total steps & \multicolumn{2}{c}{5000} \\
warmup steps & \multicolumn{2}{c}{500} \\
precision & \multicolumn{2}{c}{bf16} \\
EMA decay & \multicolumn{2}{c}{none} \\
amortizer architecture & \multicolumn{2}{c}{\texttt{c2048d16a4} MLP w/ 8-head cross-attn per adaLN} \\

\midrule
\textit{Latent-space generation} & \makebox[4.5cm][c]{\textbf{FLUX.2 [klein] 4B Base}} & \makebox[4.5cm][c]{\textbf{FLUX.2 [klein] 4B}} \\
\midrule
original sampling steps & $50\times2$ & 4 \\
initialization & \multicolumn{2}{c}{official pretrained weights} \\
base resolution & \multicolumn{2}{c}{$512\times512$} \\
trainable parameters & \multicolumn{2}{c}{image generator only} \\
text encoder (frozen) & \multicolumn{2}{c}{Qwen3-4B} \\
text tokens & \multicolumn{2}{c}{512} \\
AMFD-SIM & \multicolumn{2}{c}{SIM (Inception \texttt{avg}, SigLIP2 \texttt{cls}, MAE \texttt{cls})} \\
AMFD-10 enc. & \multicolumn{2}{c}{10 encoders in Table~\ref{tab:repr_encoders}, SigLIP2 using \texttt{attn}} \\
training data & \multicolumn{2}{c}{prepared following~\citet{feng2026representation}} \\
batch size & \multicolumn{2}{c}{1024} \\
optimizer & \multicolumn{2}{c}{AdamW~\citep{loshchilov2017decoupled}, $\beta_1=0.9, \beta_2=0.95$} \\
learning rate & \multicolumn{2}{c}{$5\text{e-}6$} \\
total steps & \multicolumn{2}{c}{1500} \\
warmup Steps & \multicolumn{2}{c}{150} \\
precision & \multicolumn{2}{c}{bf16} \\
EMA decay & \multicolumn{2}{c}{none} \\
amortizer architecture & \multicolumn{2}{c}{\texttt{c2048d16a4} MLP w/ 8-head cross-attn per adaLN} \\
\bottomrule
\end{tabular}
\end{table}

\section{Additional Metrics on ImageNet Post-training}
\label{app:add_metrics}
Table~\ref{tab:add_metrics} reports the FDr~\citep{yang2026representation} metrics evaluated across six diverse pre-trained representation encoders for ImageNet post-training.

\begin{table}[t]
    \centering
    \caption{Additional FDr metrics for one-step post-training on ImageNet-256$\times$256. Lower is better for all metrics.}
    \label{tab:add_metrics}
    \small
    \addtolength{\tabcolsep}{-1pt}
    \begin{tabular}{llcccccccc}
        \toprule
        & & \multicolumn{6}{c}{FDr $\downarrow$} & & \\
        \cmidrule(lr){3-8}
        Method & Repr. & Inception & ConvNeXt & DINOv2 & MAE & SigLIP & CLIP
          & FID $\downarrow$ & FDr$^6$ $\downarrow$ \\
        \midrule
        \multicolumn{2}{l}{\color{gray!70} 50K validation images} & \color{gray!70} 1.00 & \color{gray!70} 1.00 & \color{gray!70} 1.00 & \color{gray!70} 1.00 & \color{gray!70} 1.00 & \color{gray!70} 1.00 & \color{gray!70} 1.68 & \color{gray!70} 1.00 \\
        \midrule
        JiT-B/16 & - & - & - & - & - & - & - & 3.71 & 15.65 \\
        + FD-loss & SIM & - & - & - & - & - & - & 1.00 & 5.53 \\
        + AMFD-C & SIM & 0.56 & 1.14 & 5.38 & 2.16 & 5.77 & 13.47 & 0.95 & 4.75 \\
        + AMFD-U & SIM & 0.56 & 1.19 & 4.64 & 1.28 & 3.26 & 12.54 & 0.95 & 3.91 \\
        \midrule
        JiT-L/16 & - & 1.54 & 3.49 & 6.10 & 8.07 & 19.37 & 25.82 & 2.59 & 10.73 \\
        + FD-loss & SIM & - & - & - & - & - & - & 0.77 & 3.24 \\
        + AMFD-C & SIM & 0.52 & 0.88 & 2.73 & 0.62 & 3.33 & 7.94 & 0.87 & 2.67 \\
        + AMFD-U & SIM & 0.51 & 0.92 & 2.42 & 0.31 & 1.94 & 6.03 & 0.85 & 2.02 \\
        \midrule
        JiT-H/16 & - & 1.18 & 2.52 & 4.28 & 5.65 & 11.91 & 20.40 & 1.97 & 7.66 \\
        + FD-loss & SIM & \textbf{0.45} & 0.86 & 2.10 & 0.43 & \textbf{1.68} & 10.37 & \textbf{0.75} & 2.65 \\
        + AMFD-C & SIM & 0.51 & 0.84 & 2.12 & 0.38 & 2.73 & 6.30 & 0.85 & 2.15 \\
        + AMFD-U & SIM & 0.49 & \textbf{0.79} & \textbf{1.88} & \textbf{0.21} & 1.73 & \textbf{5.65} & 0.83 & \textbf{1.79} \\
        \midrule
        pMF-B/16 & - & - & - & - & - & - & - & 3.31 & 13.70 \\
        + FD-loss & SIM & - & - & - & - & - & - & 0.85 & 3.50 \\
        + AMFD-C & SIM & 0.57 & 0.81 & 4.34 & 2.20 & 6.98 & 8.78 & 0.95 & 3.94 \\
        + AMFD-U & SIM & 0.55 & 0.83 & 3.87 & 1.69 & 5.80 & 7.86 & 0.92 & 3.43 \\
        \midrule
        pMF-L/16 & - & 1.62 & 1.36 & 6.70 & 9.72 & 20.34 & 14.81 & 2.72 & 9.09 \\
        + FD-loss & SIM & \textbf{0.47} & 0.57 & 2.21 & 0.56 & \textbf{3.03} & 5.68 & \textbf{0.78} & 2.09 \\
        + AMFD-C & SIM & 0.52 & \textbf{0.53} & 2.38 & 0.67 & 4.07 & 5.35 & 0.88 & 2.25 \\
        + AMFD-U & SIM & 0.51 & 0.55 & \textbf{2.14} & \textbf{0.47} & 3.36 & \textbf{5.03} & 0.86 & \textbf{2.01} \\
        \midrule
        pMF-H/16 & - & 1.37 & 1.15 & 5.43 & 6.25 & 15.33 & 11.68 & 2.29 & 6.87 \\
        + FD-loss & SIM & \textbf{0.46} & \textbf{0.57} & \textbf{1.74} & 0.35 & \textbf{2.46} & 5.77 & \textbf{0.77} & 1.89 \\
        + AMFD-C & SIM & 0.51 & 0.58 & 1.91 & 0.41 & 3.47 & 4.71 & 0.86 & 1.93 \\
        + AMFD-U & SIM & 0.50 & 0.60 & \textbf{1.74} & \textbf{0.27} & 2.86 & \textbf{4.54} & 0.85 & \textbf{1.75} \\
        \bottomrule
    \end{tabular}%
\end{table}

\section{Additional Visualizations}
\label{app:visualization}

\textbf{Class-conditional generation on ImageNet.} We provide additional generated samples on ImageNet 256$\times$256 using post-trained JiT-H/16 and pMF-H/16 in Figure~\ref{fig:jit-h-fd-amfd} and Figure~\ref{fig:pmf-h-fd-amfd}, respectively. In each figure, the columns from left to right display samples from base models post-trained with FD-loss, our conditional AMFD (AMFD-C), and our unconditional AMFD (AMFD-U). To ensure a direct visual comparison, identically positioned samples across the three columns are generated using the exact same initial noise.

\textbf{Text-to-image generation.} Figure~\ref{fig:add-t2i} showcases additional text-to-image samples produced by the FLUX.2 [klein] 4B model post-trained with our AMFD loss. Ordered column by column from left to right (and top to bottom within each column), the corresponding text prompts are as follows:
\begin{enumerate}[label=(\arabic*)]
    \item A striking close-up portrait of a young woman with vitiligo wearing a simple cobalt silk blouse, natural curls framing her face, confident calm expression, pale peach studio background, soft beauty lighting, truthful unretouched skin detail.
    \item A large bold word AMFD painted in glowing anime bubble letters across the center of a night festival stage, fireworks behind it, colorful spotlights, dynamic celebratory composition.
    \item A silver electric concept coupe parked on a mirror-like salt flat at blue hour, seamless aerodynamic body, thin white running lights, distant mountains doubled in shallow water, minimal luxury automotive photography.
    \item A poetic close-up portrait of a young adult Chinese woman standing among tall silver grass on a coastal hill, cream scarf, hair lifted by sea wind, overcast horizon softened behind, introspective expression, muted cinematic color and fine film grain.
    \item A sculptural ceramic vase filled with one branch of white magnolia, placed on a travertine pedestal against a muted terracotta wall, clean side light and subtle shadows, refined design editorial.
    \item A macro still life of a cracked blue ceramic bowl repaired with gold kintsugi, placed on dark linen with white flowers.
    \item A delicate watercolor-style scene of a sailboat crossing a lake at sunrise, reeds, ducks, pale orange sky, calm composition.
    \item A tiny hummingbird hovering beside a red trumpet flower, iridescent throat flashing emerald and magenta, wings rendered as delicate motion, dew-covered garden in soft morning bokeh, macro wildlife photograph.
    \item A curious alpaca standing at the edge of a high Andean village, woven red tassel on its halter, terraced green mountains and low clouds behind, bright documentary travel photograph.
    \item A secluded tropical cove viewed through limestone arches, luminous aquamarine water, a narrow white beach and one wooden boat, humid morning haze, realistic Southeast Asian travel photograph.
    \item A jewel-toned fruit tart beneath a glass bakery dome, concentric raspberries, kiwi and mango with a mirror glaze, soft reflections on a walnut counter, elegant close food photography.
    \item A proud Friesian horse galloping along a wide Atlantic beach, black mane streaming, hooves scattering silver surf, overcast sky and distant dunes, dramatic equine photography.
    \item A massive green word FRESH placed above a farm stand with fruit crates, morning sunlight, clean commercial photography.
    \item A red electric motorcycle parked in a narrow alley after rain, neon signs reflected in wet pavement, steam rising from vents, realistic urban night photography, low angle lens.
    \item A delicate close-up portrait of a young Chinese ink painter in a bright studio, subtle charcoal marks on one fingertip and cheek, long dark hair loosely tied, rice paper bokeh, introspective gaze, soft high-key natural photography.
    \item A small blue delivery van parked beside a quiet seaside road under a bright but soft afternoon sky. The van has rounded corners, clean windows, slightly worn tires, and a simple roof rack carrying two neatly tied wooden crates. The road curves gently along the coast, with a low stone barrier, pale grass, and calm water visible beyond it. A few distant sailboats appear near the horizon, and the sunlight gives the van a mild highlight along its side panel. The composition should feel clear and pleasant, like a travel photograph from a slow coastal route. Focus on the van's shape, paint surface, glass reflections, tire texture, road markings, stone barrier, sea color, and open air. The mood should be relaxed, bright, and realistic, with simple forms and a clean sense of place. The camera frames the van from a relaxed roadside angle, giving the rounded body, roof rack, crates, tires, glass, road curve, stone barrier, grass, sea, and horizon a clear readable structure. Colors stay bright and gentle, with blue paint, pale road, warm grass, soft sky, and calm water creating a clean coastal mood.
    \item A cinematic portrait of a weathered polar expedition pilot inside an icy cockpit, frost on the glass, amber instrument lights reflecting on the face, shallow depth of field, realistic skin texture, dramatic cold atmosphere.
    \item A green sea turtle drifting above a colorful coral garden, small reef fish surrounding its shell, clear sunbeams descending from the surface, richly detailed but natural underwater photography.
    \item A black cat sitting on the hood of a vintage cream convertible parked beside a lavender field, sunset sky, chrome reflections, playful fashion campaign mood.
    \item The Namib Desert at sunrise seen from a high dune, sweeping orange ridges casting blue shadows, a line of gemsbok tracks crossing untouched sand, minimal and precise fine-art landscape photograph.
    \item A sunlit close-up portrait of a weathered Greek fisherman at a harbor, white beard, knitted navy cap, deep smile lines, turquoise boat paint softly visible behind him, direct honest gaze, Mediterranean color, high-detail environmental portrait.
    \item A colorful cereal box on a breakfast table, the front label clearly reads PIXEL CRUNCH in playful letters, morning sunlight, realistic product photography.

\end{enumerate}

\begin{figure}[!t]
    \centering
    \includegraphics[width=\linewidth]{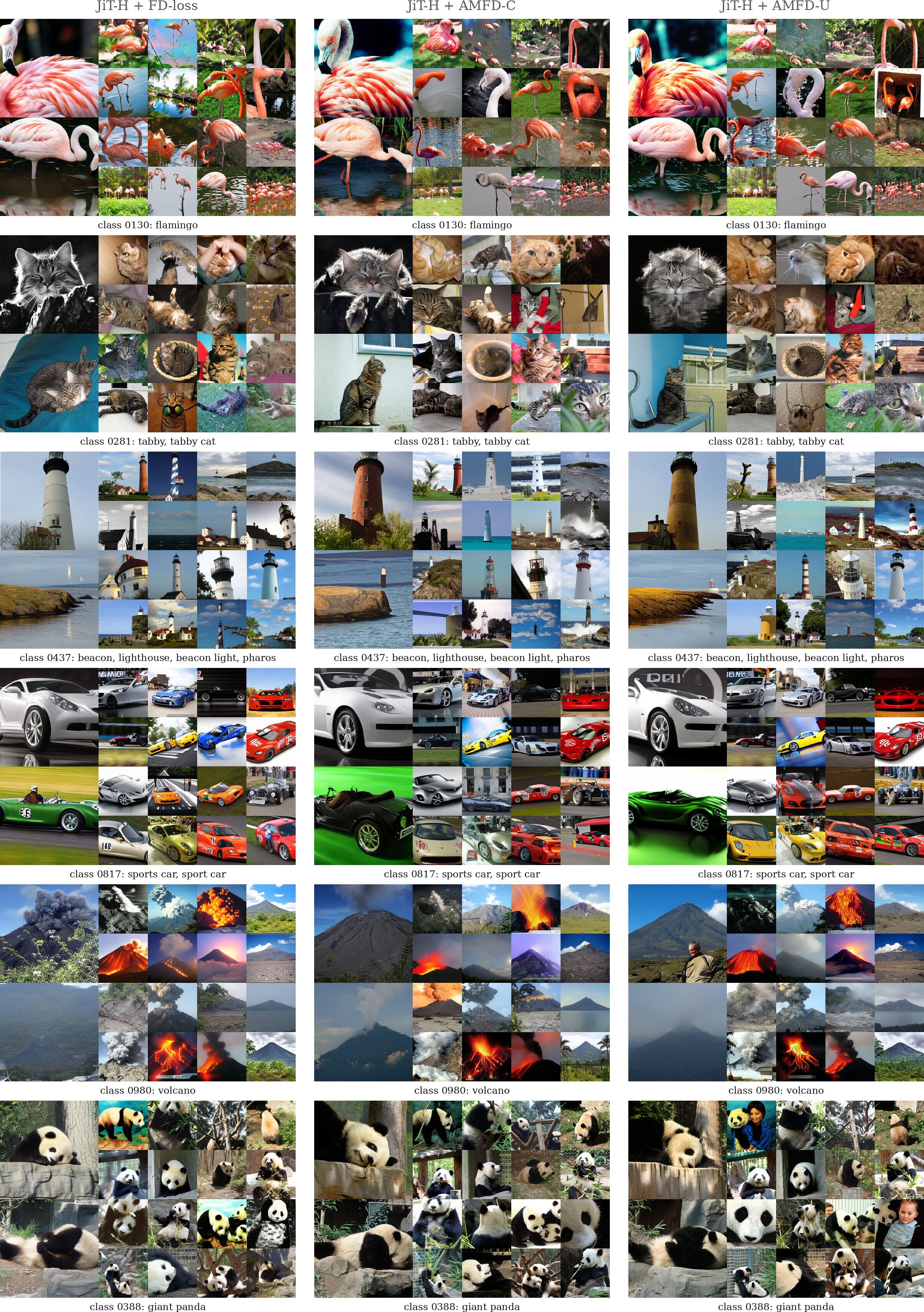}
    \caption{Uncurated samples from JiT-H post-trained with FD-loss (left), our AMFD-C (middle), and AMFD-U (right), generated using identical initial noise.}
    \label{fig:jit-h-fd-amfd}
\end{figure}

\begin{figure}[!t]
    \centering
    \includegraphics[width=\linewidth]{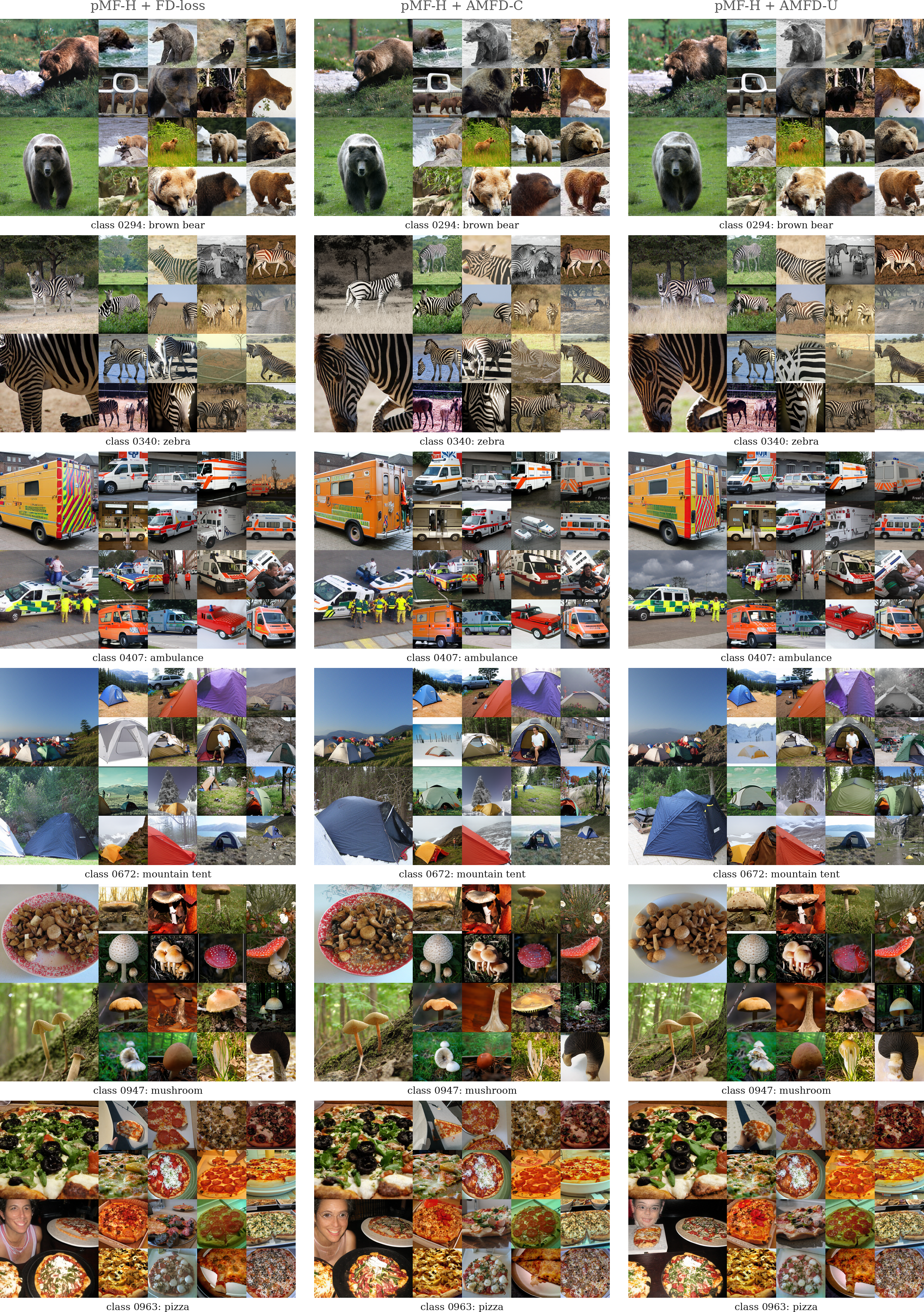}
    \caption{Uncurated samples from pMF-H post-trained with FD-loss (left), our AMFD-C (middle), and AMFD-U (right), generated using identical initial noise.}
    \label{fig:pmf-h-fd-amfd}
\end{figure}

\begin{figure}[!t]
    \centering
    \includegraphics[width=\linewidth]{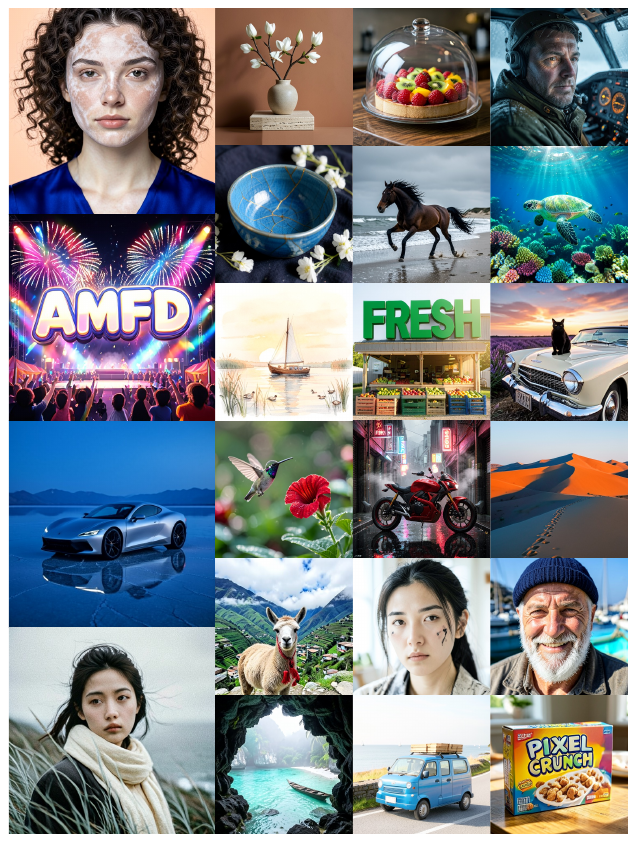}
    \caption{Additional text-to-image samples generated by the FLUX.2 [klein] 4B model post-trained with AMFD.}
    \label{fig:add-t2i}
\end{figure}
\end{document}

%% file: math_commands.tex
\usepackage{amsmath,amsfonts,bm}

\def\eqref#1{equation~\ref{#1}}

\def\1{\bm{1}}

\DeclareMathAlphabet{\mathsfit}{\encodingdefault}{\sfdefault}{m}{sl}
\SetMathAlphabet{\mathsfit}{bold}{\encodingdefault}{\sfdefault}{bx}{n}

